\title{Vector-valued Gaussian Processes on Riemannian Manifolds via Gauge Independent Projected Kernels}
\author{Michael Hutchinson\textsuperscript{\ensuremath{*1}}
  \\\bfseries So Takao\textsuperscript{\ensuremath{*5}}
  \And Alexander Terenin\textsuperscript{\ensuremath{*2,3}}
  \\\bfseries Yee Whye Teh\textsuperscript{\ensuremath{1}}
  \And Viacheslav Borovitskiy\textsuperscript{\ensuremath{*4}}
  \\\bfseries Marc Peter Deisenroth\textsuperscript{\ensuremath{5}}
  \AND
  \normalfont
  \textsuperscript{\ensuremath{1}}University of Oxford
  \quad
  \textsuperscript{\ensuremath{2}}University of Cambridge
  \quad
  \textsuperscript{\ensuremath{3}}Imperial College London
  \\
  \textsuperscript{\ensuremath{4}}St. Petersburg State University
  \,\,\,
  \textsuperscript{\ensuremath{5}}Centre for Artificial Intelligence, University College London
}
\begin{document}

\maketitle

\begin{abstract}
Gaussian processes are machine learning models capable of learning unknown functions in a way that represents uncertainty, thereby facilitating construction of optimal decision-making systems.
Motivated by a desire to deploy Gaussian processes in novel areas of science, a rapidly-growing line of research has focused on constructively extending these models to handle non-Euclidean domains, including Riemannian manifolds, such as spheres and tori.
We propose techniques that generalize this class to model vector fields on Riemannian manifolds,  which are important in a number of application areas in the physical sciences.
To do so, we present a general recipe for constructing gauge independent kernels, which induce Gaussian vector fields, i.e. vector-valued Gaussian processes coherent with geometry, from scalar-valued Riemannian kernels.
We extend standard Gaussian process training methods, such as variational inference, to this setting.
This enables vector-valued Gaussian processes on Riemannian manifolds to be trained using standard methods and makes them accessible to machine learning practitioners.
\end{abstract}

\section{Introduction}

Gaussian processes are an effective model class for learning unknown functions.
They are particularly attractive for use within data-efficient decision systems, including Bayesian optimization \cite{Brochu2009, Osborne2009, Shahriari2016}, model-based reinforcement learning \cite{Rasmussen2004, Deisenroth2011c}, and active learning \cite{Krause2008}.
In these settings, Gaussian processes can represent and propagate uncertainty, as well as encode inductive biases as prior information in order to drive data efficiency. 
A key aspect of prior information is the geometry of the domain on which the Gaussian process is defined, which often encodes key properties, such as symmetry.
Following the growing deployment of Gaussian processes, a number of recent works have focused on how to define Gaussian processes on non-Euclidean domains in ways that reflect their geometric structure \cite{Borovitskiy2020, Borovitskiy2021}.
{
\renewcommand\thefootnote{}\footnotetext[0]{\textsuperscript{\ensuremath{*}}Equal contribution. Code: \url{https://github.com/MJHutchinson/ExtrinsicGaugeIndependentVectorGPs}. For a general implementation, see \url{https://github.com/GPflow/GeometricKernels/}.}
}

In many applications, such as climate science, quantities of interest are vector-valued. 
For example, global wind velocity modeling must take into account both speed and direction, and is represented by a vector field.
On geometric domains, the mathematical properties of vector fields can differ noticeably from their Euclidean counterparts: for instance, one can prove that every smooth vector field on a sphere must vanish in at least one point \cite{lee2013smooth}.
Behavior such as this simultaneously highlights the need to represent geometry correctly when modeling vector-valued data, and presents a number of non-trivial technical challenges in constructing models that are mathematically sound.

In particular, even the classical definition of a vector-valued Gaussian process---that is, a random function with multivariate Gaussian marginals at any finite set of points---already fails to be a fully satisfactory notion when considering smooth vector fields on a sphere.
This is because tangent vectors at distinct points live within different tangent spaces, and it is not clear how to construct a cross-covariance between them that does not depend on a completely arbitrary choice of basis vectors within each space.
Constructions that are independent of this choice of basis are called \emph{gauge independent}, and  recent work \cite{weiler2021coordinate,cohen2019gauge,haan2021gauge} in geometric machine learning has focused on satisfying this key property for convolutional neural networks that deal with non-Euclidean data.

Our contributions include the following.
We (a) present a differential-geometric formalism for defining Gaussian vector fields on manifolds in a coordinate-free way, suitable for Gaussian process practitioners with minimal familiarity with differential geometry, (b) present a universal and fully constructive technique for defining prior Gaussian vector fields on Riemannian manifolds, which we term the \emph{projected kernel} construction, and (c) discuss how to adapt key components in the computational Gaussian process toolkit, such as inducing point methods, to the vector field setting.

The structure of the paper is as follows. 
In Section~\ref{sec:gp}, we define vector-valued Gaussian processes on smooth manifolds.  
We start by reviewing the multi-output Gaussian process set-up, which is typically used in machine learning. 
We then detail a differential-geometric formalism for defining vector-valued Gaussian processes on smooth manifolds. 
In Section~\ref{sec:model construction}, we provide a concrete construction for these Gaussian processes on Riemannian manifolds and discuss how they can be trained using variational sparse approximations. 
Section~\ref{sec:examples} showcases Gaussian vector fields on two tasks, namely weather imputation from satellite observations and learning the dynamics of a mechanical system. 

\section{Vector-valued Gaussian Processes on Smooth Manifolds}
\label{sec:gp}

A vector-valued Gaussian process (GP) is a random function $\v{f} : X \-> \R^d$ such that, for any finite set of points $\v{x} \in X^n$, the random variable $\v{f}(\v{x}) \in \R^{n \x d}$ is jointly Gaussian.
Every such GP is characterized by its mean function $\v{\mu} : X \-> \R^d$ and matrix-valued covariance kernel $k : X \x X \-> \R^{d \x d}$, which is a positive-definite function in the matrix sense.
These functions satisfy $\E(\v{f}(\v{x})) = \v{\mu}(\v{x})$ and $\Cov(\v{f}(\v{x}), \v{f}(\v{x}^\prime)) = k(\v{x},\v{x}^\prime)$ for any $\v{x}, \v{x}^\prime\in X$.
Here, dependence between function values is encoded in the kernel's variability along its input domain, and correlations between different dimensions of the vector-valued output are encoded in the matrix that the kernel outputs.

Consider a function $\v{f}$ with $\v{y} = \v{f}(\v{x}) + \v\eps$, where $\v\eps\sim \mathrm{N}(\v{0}, \sigma^2\m{I})$ and training data $(\v{x}, \v{y})$. Placing a GP prior $\v{f}\~[GP](0,k)$ on the unknown function results in a GP posterior, whose  mean and covariance are given by 
\[ \label{eqn:gp_dist_cond}
\E(\v{f}\given\v{y}) &= \m{K}_{(\.)\v{x}} (\m{K}_{\v{x}\v{x}} + \sigma^2\m{I})^{-1}\v{y}
&
\Cov(\v{f}\given\v{y}) &= \m{K}_{(\.,\.)} -   \m{K}_{(\.)\v{x}} (\m{K}_{\v{x}\v{x}} + \sigma^2\m{I})^{-1}  \m{K}_{\v{x}(\.)}.
\]
Here, $(\.)$ denotes an arbitrary set of test locations, $\m{K}_{\v{x} \v{x}} = k(\v{x}, \v{x})$ is the kernel matrix, and $\m{K}_{ (\cdot)\v{x}} = k(\v{x}, \cdot)$ is the cross-covariance matrix between function values evaluated at the training and test inputs. 
The GP posterior can also be written as
\[ \label{eqn:gp_path_cond}
(\v{f}\given\v{y})(\.) = \v{f}(\.) + \m{K}_{(\.)\v{x}} (\m{K}_{\v{x}\v{x}} + \sigma^2\m{I})^{-1}(\v{y} - \v{f}(\v{x}) - \v\eps),
&
&
\v\eps &\~[N](\v{0},\sigma^2\m{I})
\]
where $\v{f}(\.)$ is the prior GP, and equality holds in distribution \cite{wilson2020pathwise,wilson2021pathwise}.
These expressions form the foundation upon which Gaussian-process-based methods in machine learning are built.

Recent works have studied techniques for working with the expressions~\eqref{eqn:gp_dist_cond}~and~\eqref{eqn:gp_path_cond} when the input domain $X$ is a Riemannian manifold, focusing both on defining general classes of kernels \cite{Borovitskiy2020}, and on efficient computational techniques \cite{wilson2020pathwise,wilson2021pathwise}.
In this setting, namely for $f : X \-> \R$, defining kernels already presents technical challenges: the seemingly-obvious first choice one might consider, namely the geodesic squared exponential kernel, is ill-defined in general \cite{feragen2015geodesic}.
We build on these recent developments to model vector fields on manifolds using GPs.
We do not consider manifold-valued generalizations of Gaussian processes, for instance $f : \R \-> X$: various constructions in this setting are instead studied by \textcite{stroock2000introduction, emery2012stochastic,mallasto2018wrapped,mallasto2019probabilistic}.
To begin, we review what a vector field on a manifold actually is.

\subsection{Vector Fields on Manifolds}

Let $X$ be a $d$-dimensional smooth manifold with $T_x X$ denoting its tangent space at $x$. Let ${TX = \cbr{(x, v) \,|\, x \in X, v \in T_xX}}$ be its \emph{tangent bundle}, and let ${T^* X = \cbr{(x, \phi) \,|\, x \in X, \phi \in T_x^*X}}$ be its \emph{cotangent bundle}---endow both spaces with the structure of smooth manifolds.
Define the projection map $\proj_X : TX \-> X$ by $\proj_X (x, v) = x$.
A \emph{vector field} on $X$ is a map that assigns each point in $X$ to a tangent vector attached to that point.
More formally, a vector field is a \emph{cross-section}, or simply a \emph{section}, of the tangent bundle, which is a map $f : X \-> TX$, such that $\proj_X f(x) = x$ for all $x$.\footnote{A vector field is \emph{not the same as} a map $\tilde{f} : X \to \R^d$: an output value $f(x) \in TX$ formally consists of both a copy of the input point $x$, and vector within the tangent space $T_x X$ at this point. This encodes the geometric structure of the underlying manifold. The algebraic requirement $\proj_X f(x) = x$ for all $x$ ensures that the tangent vector chosen correctly corresponds to the point at which it is attached.} A vector field is called \emph{smooth} if this map $f$ is smooth.

To represent a vector field on a manifold numerically, one must choose a basis in each tangent space, which serves as a coordinate system for vectors in the tangent space.
On many manifolds it is \emph{impossible} to choose these basis vectors in a way that they vary smoothly in space.\footnote{If a smooth choice of basis vectors existed, it would define a smooth non-vanishing vector field. On the sphere, by the \emph{hairy ball theorem}, all smooth vector fields vanish in at least one point, so no such bases exist.}
This can be handled by working with local coordinates, or with bases that are non-smooth.
Any chosen set of basis vectors is arbitrary, so objects constructed using them should not depend on this choice.
Constructions that satisfy this property are called \emph{gauge independent}.
This notion is illustrated in Figure \ref{fig:gauges}, and will play a key role in the sequel.

\begin{figure}
\begin{tikzpicture}
\node at (0,2.3375) {\includegraphics[scale=0.25]{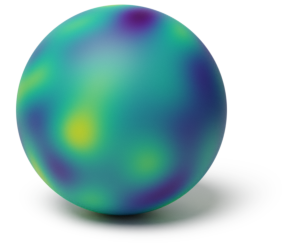}};
\node at (0,0) {\includegraphics[scale=0.25]{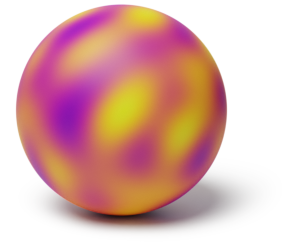}};
\node at (2.26875,2.3375) {\includegraphics[scale=0.25]{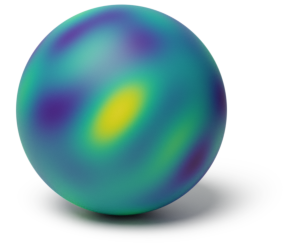}};
\node at (2.26875,0) {\includegraphics[scale=0.25]{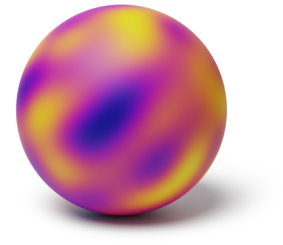}};
\node at (3.678125,2.3375) {\LARGE\ensuremath{\x}};
\node at (3.678125,0) {\LARGE\ensuremath{\x}};
\node at (5.3625,2.3375) {\includegraphics[scale=0.25]{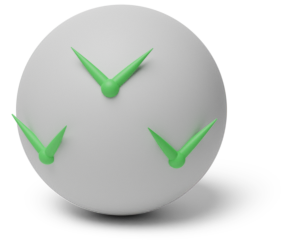}};
\node at (5.3625,0) {\includegraphics[scale=0.25]{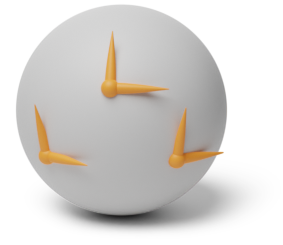}};
\node at (6.7375,1.16875) {\Huge\ensuremath{=}};
\node at (9.7625,1.1) {\includegraphics[scale=0.25]{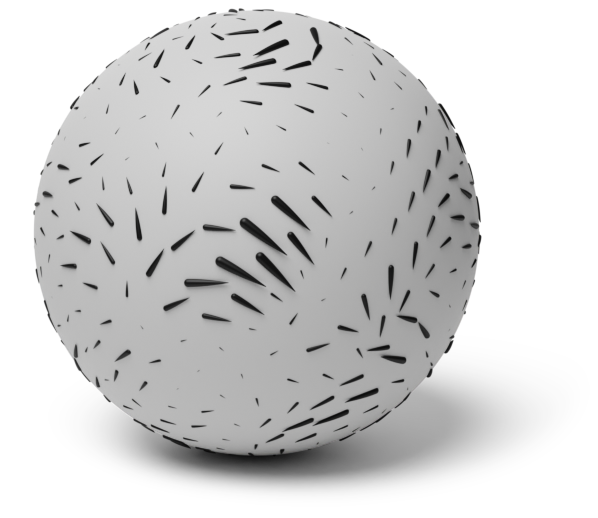}};
\node at (0.9625,3.9875) {Basis coefficients};
\node at (5.190625,3.9875) {Frame};
\node at (9.384375,3.9875) {Vector field};
\end{tikzpicture}
\caption{Illustration on $\mathbb{S}^2$. Here we illustrate two possible bases (also called frames), consisting of green and orange basis vectors (center), that can be chosen locally on the manifold $\mathbb{S}^2$. The vector field on $\mathbb{S}^2$ (right) can be produced by taking two scalar fields (left) in each respective color, and combining them with the basis vectors (center) to form the vector field.}
\label{fig:gauges}
\end{figure}
\subsection{Gaussian Vector Fields}
\label{sec:vv-manifold-gp}

Upon reflecting on the above considerations in the context of GPs, the first issue one encounters is that, for a random vector field $f : X \-> TX$, it is not clear what it means for finite-dimensional marginal distributions to be \emph{multivariate Gaussian} given that $f$ takes its values in a bundle rather than a vector space.
The first step towards constructing Gaussian vector fields, therefore, involves adapting the notion of \emph{finite-dimensional marginal distributions} appropriately.

\begin{definition}
\label{def:vect-gp}
Let $X$ be a smooth manifold.
We say that a random vector field $f$ is \emph{Gaussian} if for any finite set of locations $x_1, \ldots, x_n \in X$, the random vector $(f(x_1),\ldots,f(x_n)) \in T_{x_1}X \oplus \ldots \oplus T_{x_n}X$ is Gaussian (either in the sense of duality or in any basis: see Appendix A for details).
\end{definition}

This definition is near-identical to the Euclidean case: the only difference is that finite-dimensional marginals are now supported in a direct sum of tangent spaces, instead of $\R^{n \x d}$.
With this definition, the standard multi-output GP properties, such as conditioning, carry over, virtually unmodified.
Definition \ref{def:vect-gp} is a natural choice: if we embed our manifold into Euclidean space, the induced GP is a vector-valued GP as defined in the beginning of Section~\ref{sec:gp}.

\begin{proposition} \label{prop:embedding} Let $X$ be a manifold and $\f{emb} : X \-> \R^p$ be a smooth embedding.
Let $f$ be a Gaussian vector field (as defined in Definition~\ref{def:vect-gp}), and let $f_{\f{emb}} : \f{emb}(X) \-> \R^p$ be its pushforward along the embedding.
Then $f_{\f{emb}}$ is a vector-valued Gaussian process in the Euclidean sense.
\end{proposition}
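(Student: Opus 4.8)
\section*{Proof proposal}

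The plan is to reduce the statement to the elementary fact that a linear image of a Gaussian random vector is again Gaussian. First I would make the pushforward explicit. Since $\f{emb}$ is an embedding it is injective, so each point $y \in \f{emb}(X)$ has a unique preimage $x \in X$, and the pushforward field is well defined by $f_{\f{emb}}(\f{emb}(x)) = \mathrm{d}(\f{emb})_x\big(f(x)\big)$, where $\mathrm{d}(\f{emb})_x : T_x X \-> T_{\f{emb}(x)}\R^p \cong \R^p$ is the differential of $\f{emb}$ at $x$. The key structural observation is that this differential is a \emph{linear} map between vector spaces for every fixed $x$.

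Next I would fix an arbitrary finite collection $x_1, \ldots, x_n \in X$ with images $y_i = \f{emb}(x_i)$ and consider the block-diagonal map
\[
\Phi = \mathrm{d}(\f{emb})_{x_1} \oplus \ldots \oplus \mathrm{d}(\f{emb})_{x_n} : T_{x_1}X \oplus \ldots \oplus T_{x_n}X \-> (\R^p)^n ,
\]
which is linear since each summand is. By hypothesis $f$ is a Gaussian vector field, so by Definition~\ref{def:vect-gp} the random vector $(f(x_1), \ldots, f(x_n))$ is Gaussian in the direct sum of tangent spaces. The pushed-forward evaluations satisfy $(f_{\f{emb}}(y_1), \ldots, f_{\f{emb}}(y_n)) = \Phi(f(x_1), \ldots, f(x_n))$, so they are the image of a finite-dimensional Gaussian random vector under a fixed linear map, hence Gaussian in $(\R^p)^n$. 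Since $x_1, \ldots, x_n$ were arbitrary, this is exactly the defining property of a vector-valued Gaussian process in the Euclidean sense.

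The only point requiring care---and the closest thing to an obstacle---is that ``Gaussian'' in Definition~\ref{def:vect-gp} is stated coordinate-free (via duality or an arbitrary basis), so I would first note that choosing any bases for the $T_{x_i}X$ identifies $(f(x_1), \ldots, f(x_n))$ with a genuine $\R^{nd}$-valued Gaussian vector, and that the matrix of $\Phi$ in these bases together with the standard basis of $\R^p$ is well defined; the conclusion is then invariant under the choice of bases precisely because linear images of Gaussians are Gaussian regardless of coordinates. A secondary subtlety is that $\Phi$ need not be surjective---each $\mathrm{d}(\f{emb})_{x_i}$ has rank $d \le p$---so the resulting Euclidean Gaussian is in general \emph{degenerate}, supported on a $d$-dimensional subspace of each $\R^p$ factor. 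This is harmless, since the definition of a multivariate Gaussian admits singular covariance matrices.
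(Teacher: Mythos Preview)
Your proposal is correct and is essentially identical to the paper's own proof: both fix finitely many points, assemble the differentials $\mathrm{d}(\f{emb})_{x_i}$ into a single block-diagonal linear map on the direct sum of tangent spaces, and invoke the fact that linear images of Gaussians are Gaussian. Your additional remarks on the coordinate-free vs.\ coordinate formulation and on degeneracy are sound and slightly more explicit than what the paper writes, but the core argument is the same.
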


All proofs in this work can be found in Appendix \ref{apdx:theory}.
Having established the notion of a vector-valued Gaussian process on a smooth manifold, we proceed to deduce what mathematical objects play the role of a mean function and kernel, so that it is clear what ingredients are needed to construct and determine such a process.

The former is clear: the mean of a Gaussian vector field should be an ordinary vector field, and will determine the mean vector at all finite-dimensional marginals.
The kernel, on the other hand, is less obvious: because distinct tangent vectors live in different tanget spaces, it is unclear whether or not a Gaussian vector field is characterized by an appropriate notion of a matrix-valued kernel, or by something else. Now, we define the right notion for kernel in this setting.
\begin{definition}
We call a symmetric function $k : T^*X \x T^*X \-> \R$ \emph{fiberwise bilinear} if for all pairs of points $x, x' \in X$
\[
k(\lambda \alpha_x + \mu \beta_x, \gamma_{x'}) &= \lambda k(\alpha_x, \gamma_{x'}) + \mu k(\beta_x, \gamma_{x'})
\]
holds for any $\alpha_x, \beta_x \in T^*_x X$, $\gamma_{x'} \in T^*_{x'} X$ and $\lambda, \mu \in \R$, and \emph{positive semi-definite} if for any set of covectors $\alpha_{x_1}, \ldots, \alpha_{x_n} \in T^*X$, we have $\sum_{i=1}^n\sum_{j=1}^n k(\alpha_{x_i}, \alpha_{x_j}) \geq 0$.
We call a symmetric fiberwise bilinear positive semi-definite function a \emph{cross-covariance kernel}.
\end{definition}

This coordinate-free function should be viewed as analogous to $((x,\v{v}),(x',\v{v}')) \|> \v{v}^T\m{K}_{x,x'}\v{v}'$ in the Euclidean setting, where $\v{v},\v{v}'$ multiply the matrix-valued kernel from both sides.
Its coordinate representation, which more closely matches the Euclidean case, will be explored in the sequel.
To show that this is indeed the right notion, we prove the following result.

\begin{theorem}
\label{prop:riem-kern}
The system of marginal distributions of a Gaussian vector field on a smooth manifold $X$ is uniquely determined by a mean vector field $\mu: X \-> TX$ and a cross-covariance kernel ${k : T^*X \x T^*X \-> \R}$. Moreover, this correspondence is one-to-one.
\end{theorem}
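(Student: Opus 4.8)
The plan is to reduce the statement to the standard finite-dimensional fact that a Gaussian measure on a finite-dimensional real vector space $V$ is uniquely determined by, and may be freely specified through, a mean $m \in V$ together with a symmetric positive semi-definite bilinear form on the dual $V^*$---its covariance, which lives naturally in $V \otimes V \cong (V^* \otimes V^*)^*$. Using the duality characterization from Definition~\ref{def:vect-gp}, the role of a ``test direction'' at $x$ is played by a covector $\alpha_x \in T^*_x X$, since $\alpha_x(f(x))$ is then a scalar Gaussian random variable; this is precisely why the kernel is defined on $T^*X \x T^*X$.

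For the forward map, given a Gaussian vector field $f$, I would set $\mu(x) := \E(f(x)) \in T_x X$, which is a well-defined section of $TX$ (i.e. $\proj_X \mu(x) = x$) because $f(x)$ is a Gaussian random variable valued in the finite-dimensional space $T_x X$, and
\[
k(\alpha_x, \beta_{x'}) := \Cov\bigl(\alpha_x(f(x)), \beta_{x'}(f(x'))\bigr).
\]
Symmetry of $k$ is symmetry of covariance; fiberwise bilinearity follows from bilinearity of covariance together with linearity of the pairing $\alpha_x \|> \alpha_x(f(x))$ in the covector slot; and positive semi-definiteness follows since $\sum_{i,j} k(\alpha_{x_i}, \alpha_{x_j}) = \Cov(S, S) \geq 0$ for $S = \sum_i \alpha_{x_i}(f(x_i))$. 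Hence the pair $(\mu, k)$ is a mean vector field together with a cross-covariance kernel.

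For the backward map, given $(\mu, k)$ and a finite tuple $x_1, \ldots, x_n$, I would put $V = \bigoplus_i T_{x_i} X$ with $V^* = \bigoplus_i T^*_{x_i} X$ and define a bilinear form on $V^*$ by $Q(\alpha, \beta) = \sum_{i,j} k(\alpha_{x_i}, \beta_{x_j})$, where $\alpha = (\alpha_{x_i})_i$ and $\beta = (\beta_{x_j})_j$. Symmetry and positive semi-definiteness of $Q$ are exactly the two defining properties of $k$, so $Q$ is a covariance on $V$; together with the mean $(\mu(x_i))_i \in V$ this specifies a unique Gaussian marginal. I would then check that this family of marginals is consistent---passing to a sub-tuple of points simply restricts the mean components and the entries of $Q$---so that it is a legitimate system of marginal distributions of a Gaussian vector field (and, via Kolmogorov extension, of an actual process). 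Applying the forward map to it recovers $(\mu, k)$, since $\mu(x_i)$ and $Q$ are read off from the one- and two-point marginals.

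For the one-to-one claim the two maps are mutually inverse: the backward-then-forward composition returns the original $(\mu, k)$ by the computation above, and the forward-then-backward composition returns the original system of marginals because each finite-dimensional Gaussian marginal is \emph{determined} by its mean and covariance, which are encoded faithfully by $\mu$ and $k$ through the pairing with covectors. I expect the only genuinely delicate point to be verifying that the stated positive semi-definiteness of $k$---nonnegativity of $\sum_{i,j} k(\alpha_{x_i}, \alpha_{x_j})$ over arbitrary covectors at arbitrary (possibly repeated) base points---is exactly equivalent to each finite-dimensional form $Q$ being positive semi-definite, so that a bona fide Gaussian measure exists on every $V$; the remainder is bookkeeping with the tangent--cotangent pairing and routine consistency.
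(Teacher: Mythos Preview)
Your proposal is correct and follows essentially the same route as the paper: the paper likewise reduces everything to the duality characterization of Gaussians on finite-dimensional vector spaces (Lemmas \ref{thm:generalized_moments1}--\ref{thm:generalized_moments2} in Appendix~\ref{apdx:theory}), extracts $(\mu,k)$ from $f$ exactly as you do in Definition~\ref{def:mean-and-kernel}, builds the finite-dimensional marginals $\mathrm{N}(\mu_{x_1,\ldots,x_n},k_{x_1,\ldots,x_n})$ from $(\mu,k)$ with the same bilinear form $Q$ you write down, verifies projectivity (Proposition~\ref{prop:projective}), and invokes Kolmogorov extension. The only difference is presentational: the paper spells out the finite-dimensional existence and uniqueness lemmas explicitly via a choice of basis, whereas you cite them as standard.
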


By virtue of defining and characterizing all Gaussian vector fields, Theorem~\ref{prop:riem-kern} assures us the definition of a kernel introduced is the correct mathematical notion.
The constructions presented here are all intrinsic or, in other words, coordinate-free, and do not involve the use of bases.
To understand how to perform numerical calculations with these kernels we proceed to study their coordinate representations with respect to a specific choice of basis.

\subsection{Matrix-valued Kernels}
\label{sec:gauge-equivariant-kernels}

In Section~\ref{sec:vv-manifold-gp}, we defined what a Gaussian vector field on a manifold is.
However, by nature of the manifold setting, the resulting objects are more abstract than usual and do not describe how it can be represented numerically.
We now develop a point of view suitable for this task.

To this end, we introduce a \emph{frame} $F$ on $X$, also known as a \emph{gauge} in physical literature, which is a collection of (not necessarily smooth) vector fields $e_1, \ldots, e_d$ on $X$ such that at each point $x \in X$, the set of vectors $e_1(x), \ldots, e_d(x)$ forms a basis of $T_xX$. 
The frame allows us to express a vector field $f$ on $X$ as simply a vector-valued function $\v{f} = (f^1, \ldots, f^d) : X \-> \R^d$, such that $f(x) = \sum_{i=1}^d f^i(x) e_i(x)$ for all $x \in X$.
The corresponding \emph{coframe} $F^*$ is defined as a collection $e^1, \ldots, e^d$ of covector fields (one-forms) on $X$ such that $\left<e^i(x) | e_j(x)\right> = \delta_{ij}$ for all $x \in X$, where $\delta_{ij}$ is the Kronecker delta.
In the following proposition, we show that if $f$ is a Gaussian vector field on $X$ (in the sense of Definition \ref{def:vect-gp}), then the corresponding vector representation~$\v{f}$ expressed in a given frame is a vector-valued GP in the standard sense.

\begin{proposition}\label{prop:matrix-representation}
Let $f$ be a Gaussian vector field defined on $X$ with cross-covariance kernel ${k : T^*X \x T^*X \-> \R}$. Given a frame $F = (e_1, \ldots, e_d)$ on $X$, define $\v{f} : X \-> \R^d$ as above. Then $\v{f}$ is a vector-valued GP in the usual sense with kernel ${\m{K}_F : X \x X \-> \R^{d \x d}}$ given by
\[
\m{K}_F(x, x') =
\begin{bmatrix} \label{eq:K_F}
k(e^1(x), e^1(x')) & \dots  & k(e^1(x), e^d(x')) \\
\vdots           & \ddots & \vdots \\
k(e^d(x), e^1(x')) & \dots  & k(e^d(x), e^d(x'))
\end{bmatrix},
\]
where $(e^i)$, with raised indices, is the coframe corresponding to $(e_i)$.
Conversely, given a vector-valued GP $\v{f} = (f^1, \ldots, f^d): X \-> \R^d$ and a frame $F = (e_1, \ldots, e_d)$ on $X$, $f(\cdot) := \sum_{i=1}^d f^i(\cdot) e_i(\cdot)$ defines a Gaussian vector field on $X$.
\end{proposition}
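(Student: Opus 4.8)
The plan is to reduce both directions to the single algebraic identity linking the coframe to the component functions, namely $f^i(x) = \langle e^i(x) \mid f(x)\rangle$, which holds because $f(x) = \sum_j f^j(x) e_j(x)$ and $\langle e^i(x) \mid e_j(x)\rangle = \delta_{ij}$. Combined with the characterization of the cross-covariance kernel furnished by the correspondence of Theorem~\ref{prop:riem-kern}, under which $k(\alpha_x,\beta_{x'}) = \Cov(\langle\alpha_x \mid f(x)\rangle, \langle\beta_{x'}\mid f(x')\rangle)$ for covectors $\alpha_x\in T_x^*X$ and $\beta_{x'}\in T_{x'}^*X$, this identity will immediately produce the matrix formula \eqref{eq:K_F}.

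For the forward direction, I would fix an arbitrary finite set of points $x_1,\dots,x_n$ and invoke Definition~\ref{def:vect-gp} to conclude that $(f(x_1),\dots,f(x_n))$ is Gaussian in $\bigoplus_k T_{x_k}X$. Each component $f^i(x_k) = \langle e^i(x_k)\mid f(x_k)\rangle$ is the image of this Gaussian vector under the fixed linear functional $e^i(x_k)$; since a finite family of linear functionals applied to a single Gaussian vector yields a jointly Gaussian family, the collection $\{f^i(x_k)\}_{i,k}$ is jointly Gaussian in $\R^{nd}$. As the points were arbitrary, $\v{f}$ is a vector-valued GP in the usual sense. Its covariance entries then follow directly, $[\m{K}_F(x,x')]_{ij} = \Cov(f^i(x), f^j(x')) = k(e^i(x), e^j(x'))$, which is exactly \eqref{eq:K_F}, and the mean is handled identically via $\mu^i(x) = \langle e^i(x)\mid \mu(x)\rangle$.

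For the converse, given a vector-valued GP $\v{f}$ and the frame $F$, the formula $f(x) := \sum_i f^i(x) e_i(x)$ defines a random section of $TX$, with the constraint $\proj_X f(x) = x$ holding automatically since each summand lies in $T_xX$. To verify it is a Gaussian vector field I would use the clause in Definition~\ref{def:vect-gp} permitting Gaussianity to be checked in any basis: the frame $F$ identifies $\bigoplus_k T_{x_k}X$ with $\R^{nd}$ by sending $e_i(x_k)$ to the standard basis vectors, under which $(f(x_1),\dots,f(x_n))$ corresponds to $(\v{f}(x_1),\dots,\v{f}(x_n))$, which is jointly Gaussian by hypothesis.

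The step I expect to require the most care is ensuring joint—rather than merely marginal—Gaussianity across all indices and points in the forward direction, which rests on the standard fact that linear images of a Gaussian vector remain jointly Gaussian. I would also note explicitly that no smoothness of the frame is needed, since all claims are finite-dimensional distributional statements that hold pointwise; the remaining verifications, such as well-definedness and symmetry of $\m{K}_F$, are routine consequences of the duality relation $\langle e^i(x)\mid e_j(x)\rangle = \delta_{ij}$.
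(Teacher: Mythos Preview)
Your proposal is correct and follows essentially the same route as the paper's proof: both directions hinge on the identity $f^i(x)=\langle e^i(x)\mid f(x)\rangle$, joint Gaussianity under linear functionals, and the covariance identity $\Cov(\langle\alpha_x\mid f(x)\rangle,\langle\beta_{x'}\mid f(x')\rangle)=k(\alpha_x,\beta_{x'})$. The only cosmetic difference is in the converse, where the paper verifies Gaussianity by testing against an arbitrary covector (duality), while you invoke the ``in any basis'' clause of Definition~\ref{def:vect-gp} directly; both are explicitly permitted by that definition and amount to the same computation.
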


This result shows precisely how numerical representations of a Gaussian vector field depends on the choice of frame. 
While this representation is not \emph{invariant} under this choice, it is \emph{equivariant}, meaning that a transformation in the frame results in an appropriate transformation of the kernel.
To make this notion precise, we introduce a matrix subgroup $G = \f{GL}(d, \R)$, called the \emph{gauge group}, that acts on $\R^d$ by a standard matrix-vector multiplication. 
Given two frames $F, F'$ on $X$, an abstract vector $f_x \in T_xX$ has two vector representations $\v{f}_x, \v{f}'_x$ in the respective frames. We say that $F'$ is obtained from $F$ by a \emph{gauge transformation} with respect to a matrix field $\m{A}: X \-> G \subseteq \R^{d \x d}$, if
\[
\v{f}'_x = \m{A}(x) \v{f}_x
\]
holds for all $x \in X$, and we write $F' = \m{A}F$. Note that $\m{A}(x)$ need not be smooth in $x$.
We see that the gauge transformation is therefore just a linear change of basis of the frame $F$ at each point for which one can identify vectors in $T_xX$ as elements in $\R^d$. 
The corresponding gauge dependant matrix-valued kernels must also respect this transformation rule, a statement of \emph{gauge independence}.

\begin{corollary}\label{cor:equivariance}
Let $F$ be a frame on $X$ and $\m{K}_F : X \x X \-> \R^{d\x d}$ be the corresponding matrix representation \eqref{eq:K_F} of a cross-covariance kernel $k: T^*X \x T^*X \-> \R$. 
This satisfies the \emph{equivariance condition}
\[
\m{K}_{\m{A} F}(x, x') = \m{A}(x) \m{K}_{F}(x,x') \m{A}(x')^T,
\]
where $\m{A} : X \-> G \subseteq \R^{d \x d}$ is a gauge transformation.
All cross-covariance kernels in the sense of Proposition~\ref{prop:riem-kern} arise this way.
\end{corollary}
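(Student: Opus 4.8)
The plan is to establish two things: first, that the matrix representation transforms equivariantly under a gauge transformation $F' = \m{A}F$; and second, that this construction exhausts all cross-covariance kernels, so that the correspondence $k \|> \{\m{K}_F\}_F$ is a bijection onto equivariant families of matrix-valued kernels. The forward statement is essentially a consequence of the fiberwise bilinearity of $k$, once the transformation law of the \emph{coframe} under $\m{A}$ is pinned down, and this latter point is the crux of the argument.

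First I would determine how the coframe $(e^i)$ transforms when the frame is acted on by $\m{A}$. By definition, $F' = \m{A}F$ means the coordinate vectors satisfy $\v{f}'_x = \m{A}(x)\v{f}_x$ for every $f_x \in T_xX$; expanding $f_x$ in both frames and matching coefficients shows that the basis vectors themselves transform by $\m{A}(x)^{-1}$, i.e. $e'_j = \sum_i (\m{A}^{-1})_{ij}\, e_i$. Imposing the duality relations $\left<(e')^i(x) | e'_j(x)\right> = \delta_{ij}$ that characterize the new coframe then forces
\[
(e')^i(x) = \sum_{l=1}^d \m{A}(x)_{il}\, e^l(x),
\]
so that the coframe transforms contravariantly with $\m{A}(x)$ itself, and not with its inverse. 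This index bookkeeping---keeping straight that the frame transforms by $\m{A}^{-1}$ while the coframe transforms by $\m{A}$---is the step most prone to error, and is where I expect the main obstacle to lie.

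Given this, the equivariance follows by direct substitution into \eqref{eq:K_F}: the $(i,j)$ entry of $\m{K}_{\m{A}F}(x,x')$ is $k\big((e')^i(x),(e')^j(x')\big)$, and inserting the coframe transformation and using fiberwise bilinearity in each argument separately pulls the coefficients $\m{A}(x)_{il}$ and $\m{A}(x')_{jm}$ out of $k$, yielding $\sum_{l,m}\m{A}(x)_{il}\, k(e^l(x),e^m(x'))\, \m{A}(x')_{jm}$, which is exactly $\big(\m{A}(x)\, \m{K}_F(x,x')\, \m{A}(x')^T\big)_{ij}$.

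For the completeness claim, I would argue that every equivariant family of matrix-valued kernels arises from a unique cross-covariance kernel, combining the converse half of Proposition~\ref{prop:matrix-representation} with the uniqueness in Theorem~\ref{prop:riem-kern}. Concretely, fixing any single frame $F$, a matrix-valued kernel $\m{K}_F$ (positive-definite in the matrix sense) defines a vector-valued GP $\v{f}$, and the converse of Proposition~\ref{prop:matrix-representation} assembles it into a Gaussian vector field $f = \sum_i f^i e_i$; by Theorem~\ref{prop:riem-kern} this $f$ corresponds to a unique cross-covariance kernel $k$. Feeding $k$ back through \eqref{eq:K_F} recovers $\m{K}_F$ in the frame $F$, while the equivariance just proved shows that its representation in any other frame $\m{A}F$ is forced to equal $\m{A}(x)\,\m{K}_F(x,x')\,\m{A}(x')^T$. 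Thus the assignment $k \|> \{\m{K}_F\}_F$ is onto the equivariant families and, by the one-to-one correspondence of Theorem~\ref{prop:riem-kern}, is a bijection---that is, all cross-covariance kernels arise this way. I expect no further obstacle here beyond correctly invoking these two earlier results; the one remaining point to verify is that the $k$ produced is independent of which frame $F$ was used to bootstrap it, which itself follows from the equivariance identity.
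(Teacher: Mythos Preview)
Your proof of the equivariance identity is essentially identical to the paper's: both first compute the coframe transformation law $(e')^i(x) = \sum_l \m{A}(x)_{il}\, e^l(x)$ (the paper isolates this as a preparatory lemma, verifying it via the duality relation $\langle (e')^i \mid e'_j\rangle = \delta_{ij}$ exactly as you propose) and then substitute into \eqref{eq:K_F} using fiberwise bilinearity to pull out the $\m{A}$ factors.

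For the completeness claim, your route is correct but considerably more elaborate than the paper's. The paper's argument is essentially one line: given any cross-covariance kernel $k$ in the sense of Theorem~\ref{prop:riem-kern}, pick any frame $F$ and form $\m{K}_F$ via \eqref{eq:K_F}; by the first part this representation is automatically gauge independent, and it trivially corresponds back to the $k$ you started with. Your detour through the converse of Proposition~\ref{prop:matrix-representation} and the uniqueness in Theorem~\ref{prop:riem-kern}---building a GP from $\m{K}_F$, extracting its intrinsic kernel, and checking frame-independence of the result---actually establishes the full bijection between cross-covariance kernels and equivariant families of matrix-valued kernels, which is stronger than the statement requires. This does no harm, but the direct observation suffices.
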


Hence, one way to define a Gaussian vector field on a manifold is to find a gauge independent kernel.
In summary, we have described Gaussian vector fields in a coordinate-free differential-geometric language, and deduced enough properties to confirm the objects defined truly deserve to be called GPs.
In doing so, we have both introduced the necessary formalism to the GP community, and obtained a recipe for defining kernels, through a simple condition atop standard matrix-valued kernels.
To proceed towards practical machine learning methods, we therefore study techniques for constructing such kernels explicitly.

\section{Model Construction and Bayesian Learning for Riemannian Manifolds}
\label{sec:model construction}

In Section~\ref{sec:gp}, we introduced a notion of a Gaussian vector field.
We now study how to use vector fields for machine learning purposes.
This entails two primary issues: (a) how to construct practically useful kernels, and (b) once a kernel is constructed, how to train Gaussian processes.

To construct a Gaussian vector field prior, the preceding theory tells us that we need to specify a mean vector field and  a cross-covariance kernel.
From the definition, it is not at all obvious how to specify a natural kernel, and experience with the scalar-valued case---where the innocuous-looking geodesic squared exponential kernel is generally not positive semi-definite on most Riemannian manifolds \cite{feragen2015geodesic}---suggests that the problem is delicate, i.e., simply guessing the kernel's form is unlikely to succeed.
Our goal, therefore, is to introduce a general construction for building wide classes of kernels from simple building blocks.

The same issues are present if we consider variational approximations to posterior GPs, such as the inducing point framework of \textcite{Titsias2009}: these are formulated using matrix-vector expressions involving kernel matrices, and it is important for the approximate posterior covariance to be gauge independent in order to lead to a valid approximate process.
We proceed to address these issues.

\subsection{Projected Kernels}

Here, we introduce a general technique for defining cross-covariance kernels $k : T^*X \x T^*X \-> \R$ and for working with such functions numerically.
Section~\ref{sec:gp} gives us a promising strategy to construct a suitable kernel---namely, it suffices to find a \emph{gauge independent matrix-valued kernel}.
At first glance, it is not obvious how to construct such a kernel in the manifold setting.
On many manifolds, such as the sphere, owing to the hairy ball theorem, every frame must be discontinuous: therefore, constructing a continuous kernel in such a choice of frame appears difficult.

To both get around these obstacles, and aid numerical implementation, we propose to isometrically embed the manifold into Euclidean space.\footnote{An embedding $\f{emb}: X \to \R^{d'}$ is called \emph{isometric} if it preserves the metric tensor. By Nash's Theorem \cite{lee2013smooth}, such embeddings exist for any $d$-dimensional manifold, with an embedded dimension $d' \leq 2d+1$.}
Doing so greatly simplifies these issues by virtue of making it possible to represent the manifold using a single global coordinate system.
On the other hand, the main trade-off from this choice is that by its extrinsic nature, the construction can make theoretical analysis more difficult.
To proceed, we need two ingredients. 
\1 An isometric embedding $\f{emb}: X \-> \R^{d'}$ of the manifold.
\2 A vector-valued Gaussian process $\v{f}': X \-> \R^{d'}$ in the standard sense.\footnote{We emphasize again that $\v{f}'$ is \emph{not} a Gaussian vector field because it is not a random section. In particular, note that $d' > d$ for most embeddings.}
\0
A simple choice which reflects the geometry of the manifold is to take $\v{f}'$ to be $d'$ independent scalar-valued GPs on $X$.

By standard results in differential geometry, any smooth map $\phi : X \-> X'$ between two manifolds induces a corresponding linear map on the tangent spaces $\d_x\phi : T_xX \-> T_{\phi(x)}X'$, which can loosely be thought of as mapping $\phi$ to its first-order Taylor expansion at $x$. Thus, an embedding $\f{emb}: X \-> \R^{d'}$, induces a map $\d_x\!\f{emb}: T_xX \-> T_{\f{emb}(x)}\R^{d'}$. Now fixing a frame $F$ on $X$, each tangent space $T_xX$ can be identified with $\R^d$, so without loss of generality, the map $\d_x\!\f{emb}$ can be expressed simply as a position-dependent matrix $\m{P}_x^T \in \R^{d' \x d}$. Taking the transpose, we obtain $\m{P}_x \in \R^{d \x d'}$, which we call the \emph{projection matrix}.
The desired Gaussian vector field on $X$, with respect to $F$, is then constructed as $\v{f}(x) = \m{P}_x \v{f}'(x)$. This procedure is illustrated in Figure \ref{fig:kernel-embedding}: there we see that to get a vector field on an $\R^3$-embedded sphere, we may take a vector-valued function on it and project its values to make vectors tangential to this sphere, thus obtaining a valid vector field.
Since the projection operator preserves smoothness and since we can take a smooth vector-valued GP to begin with, it is clear that this approach may be used to build smooth vector fields.

\begin{figure}

\begin{tikzpicture}
\node at (1.1,1.1) {\includegraphics[scale=0.25]{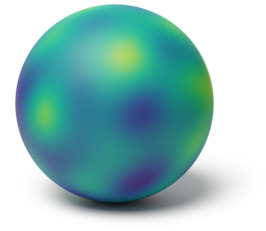}};
\node at (0,-0.9625) {\includegraphics[scale=0.25]{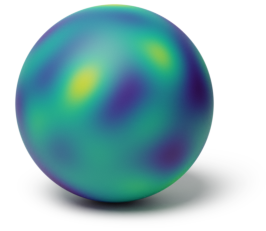}};
\node at (2.2,-0.9625) {\includegraphics[scale=0.25]{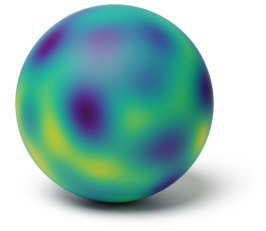}};
\node at (5.6375,0) {\includegraphics[scale=0.25]{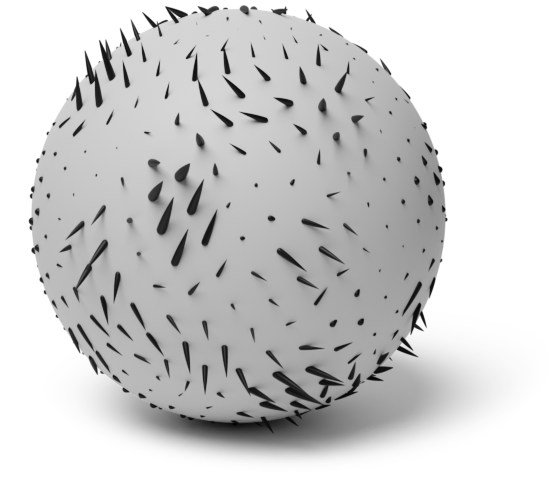}};
\node at (10.0375,0) {\includegraphics[scale=0.25]{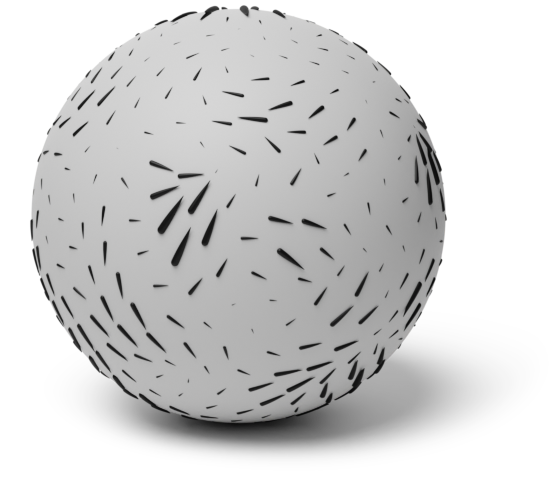}};
\node at (0.928125,2.646875) {Scalar processes};
\node at (5.29375,2.646875) {Embedded process};
\node at (9.69375,2.646875) {Projected process};
\end{tikzpicture}
\caption{Illustration of the construction process of the projected process. The manifold $\mathbb{S}^2$ is embedded into $\R^3$. Three identical scalar GPs (left) are placed on the manifold. These three scalar GPs are combined to construct a vector-valued GP in the ambient Euclidean space (center). This GP is then projected onto the tangent space of $\mathbb{S}^2$ as a subspace of the tangent space of $\R^3$ (right).}
\label{fig:kernel-embedding}
\end{figure}

We prove that (a) the resulting expression is, indeed, a kernel, and that (b) no expressivity is lost via the construction because all cross-covariance kernels arise this way.

\begin{proposition} \label{prop:projected-kernel}
Let $(X, g)$ be a Riemannian manifold, $\f{emb} : X \-> \R^{d'}$ be an isometric embedding and $F$ be a frame on $X$.
We denote by $\m{P}_{(\.)} : X \-> \R^{d \x d'}$ the associated projection matrix under $F$, and let $\v{f}' : X \-> \R^{d'}$ be any vector-valued Gaussian process with matrix-valued kernel $\v\kappa : X \x X \-> \R^{d' \x d'}$.
Then, the vector-valued function $\v{f} = \m{P}\v{f}'$ defines a Gaussian vector field $f$ on $X$ using the construction in Proposition \ref{prop:matrix-representation}, whose kernel under the frame $F$ has matrix representation
\[\label{eq:projection-relation}
\m{K}_F(x, x') = \m{P}_x \v\kappa(x,x') \m{P}_{x'}^T.
\]
Moreover, all cross-covariance kernels $k : T^*X \x T^*X \-> \R$ arise this way.
We call a kernel defined this way a \emph{projected kernel}.
\end{proposition}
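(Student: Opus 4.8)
The plan is to prove the two halves of the statement separately: first that $\v{f} = \m{P}\v{f}'$ yields a Gaussian vector field whose matrix representation is~\eqref{eq:projection-relation}, and then the surjectivity claim that every cross-covariance kernel is obtained this way. The forward direction is essentially bookkeeping. Since $\v{f}'$ is a vector-valued GP, for any finite collection $x_1, \dots, x_n \in X$ the stacked vector $(\v{f}'(x_1), \dots, \v{f}'(x_n))$ is jointly Gaussian; applying the fixed (deterministic, though possibly non-smooth) linear maps $\m{P}_{x_i}$ blockwise is an affine transformation, so $(\v{f}(x_1), \dots, \v{f}(x_n))$ is again jointly Gaussian. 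Hence $\v{f} : X \-> \R^d$ is a vector-valued GP in the usual sense, and the converse direction of Proposition~\ref{prop:matrix-representation} then promotes the pair $(\v{f}, F)$ to a genuine Gaussian vector field $f$ on $X$. Its matrix-valued kernel follows from bilinearity of covariance,
\[
\m{K}_F(x,x') &= \Cov(\m{P}_x \v{f}'(x),\, \m{P}_{x'}\v{f}'(x')) \\
&= \m{P}_x \Cov(\v{f}'(x),\v{f}'(x')) \m{P}_{x'}^T = \m{P}_x \v\kappa(x,x') \m{P}_{x'}^T,
\]
which is exactly~\eqref{eq:projection-relation}; positive semi-definiteness of $\m{K}_F$ is automatic, being the covariance of an actual process.

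For the surjectivity claim, I would start from an arbitrary cross-covariance kernel $k : T^*X \x T^*X \-> \R$, pass to its matrix representation $\m{K}_F$ in the given frame via~\eqref{eq:K_F}, and exploit that by Theorem~\ref{prop:riem-kern} this representation determines $k$ uniquely. It therefore suffices to exhibit an ambient kernel $\v\kappa$ whose projection recovers $\m{K}_F$. The geometric input is that $\f{emb}$ is an \emph{isometric} immersion: the differential $\d_x\!\f{emb}$ is injective, so its frame-coordinate matrix $\m{P}_x^T$ has full column rank, and since the columns of $\m{P}_x^T$ are the images $\d_x\!\f{emb}(e_i(x))$, isometry gives $\m{P}_x \m{P}_x^T = \m{G}_x$, the Gram matrix $[g_x(e_i,e_j)]_{ij}$, which is invertible. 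Taking $F$ orthonormal (permissible, since non-smooth frames are allowed and pointwise Gram–Schmidt produces one) simplifies this to $\m{P}_x \m{P}_x^T = \m{I}_d$, so that $\m{P}_x$ has $\m{P}_x^T$ as a right inverse.

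I would then set $\v\kappa(x,x') = \m{P}_x^T \m{K}_F(x,x') \m{P}_{x'}$ and verify two things. First, the projection identity holds: $\m{P}_x \v\kappa(x,x') \m{P}_{x'}^T = (\m{P}_x \m{P}_x^T) \m{K}_F(x,x') (\m{P}_{x'} \m{P}_{x'}^T) = \m{K}_F(x,x')$. Second, $\v\kappa$ is a legitimate positive semi-definite matrix-valued kernel: for any points $x_i$ and vectors $v_i \in \R^{d'}$, setting $w_i := \m{P}_{x_i} v_i$ gives
\[
\sum_{i=1}^n\sum_{j=1}^n v_i^T \v\kappa(x_i,x_j) v_j = \sum_{i=1}^n\sum_{j=1}^n w_i^T \m{K}_F(x_i,x_j) w_j \geq 0,
\]
by positive semi-definiteness of $\m{K}_F$. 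Letting $\v{f}'$ be any GP with this kernel and running the forward construction returns $\m{K}_F$, hence $k$ by uniqueness. For a non-orthonormal frame, inserting the factors $\m{G}_x^{-1}$ into the definition of $\v\kappa$ yields the same two conclusions.

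The hard part is the converse direction: everything hinges on producing a valid positive semi-definite ambient kernel $\v\kappa$ that projects back onto a prescribed $\m{K}_F$, and this is precisely where isometry of the embedding is indispensable—it supplies the right-inverse structure $\m{P}_x \m{P}_x^T = \m{I}_d$ that makes the ``lift-then-project'' identity collapse correctly, while the substitution $w_i = \m{P}_{x_i} v_i$ is what transfers positive semi-definiteness from $\m{K}_F$ to the lifted kernel. By contrast, the forward direction requires only that Gaussianity is preserved under linear maps together with the correspondence already established in Proposition~\ref{prop:matrix-representation} and Corollary~\ref{cor:equivariance}.
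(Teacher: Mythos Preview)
Your proof is correct and follows essentially the same route as the paper's: the forward direction is identical (linear maps preserve Gaussianity, bilinearity of covariance, then Proposition~\ref{prop:matrix-representation}), and for the converse you also lift $\m{K}_F$ via $\m{P}^T(\cdot)\m{P}$ and use the isometry identity $\m{P}_x\m{P}_x^T = \m{G}_x$ to make the projection collapse. The only cosmetic difference is that you reduce to an orthonormal frame so that $\m{P}_x\m{P}_x^T = \m{I}_d$, whereas the paper stays in an arbitrary frame and absorbs the Gram factors through the gauge-equivariance of Corollary~\ref{cor:equivariance}, writing $\v\kappa(x,x') = \m{P}_x^T\,\m{K}_{\m{\Gamma}^{-1}F}(x,x')\,\m{P}_{x'}$; your explicit verification that the lifted $\v\kappa$ is positive semi-definite is a small addition the paper leaves implicit.
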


To construct these kernels we require scalar-valued kernels on manifolds to use as a basic building block.
These are studied in the general Riemannian setting by \textcite{lindgren11} and \textcite{Borovitskiy2020}: relying on these kernels is the only reason we require the Riemannian structure.
It is also possible to obtain such kernels using embeddings, following \textcite{lin2019extrinsic}.
Similar techniques to those we consider are used by \textcite{freeden2008spherical} to construct vector-valued zonal kernels on the sphere: in contrast, we work with arbitrary manifolds.
The projection kernel idea is a very general way to build kernels for vector fields by combining scalar kernels, but effective scalar kernels, naturally, rely on Riemannian structure.\footnote{The structure of a smooth manifold is not rigid enough to define natural kernels. For instance, the smooth structure of the sphere is indistinguishable from the smooth structure of an ellipsoid or even of the dragon manifold from~\textcite{Borovitskiy2020}, but their Riemannian structures differ considerably.}
\Cref{fig:example_samples} shows random samples from Gaussian processes constructed with the described kernels.

The projected kernel construction both makes it easy to define cross-covariance kernels on general manifolds, and describes a straightforward way to implement them numerically by representing the embedded manifold in coordinates and calculating the resulting matrix-vector expressions.
The constructed kernel depends on the embedding, but can be transformed appropriately if switching to a different embedding.
Embeddings, in turn, are available for most manifolds of practical interest, and are obtained automatically for manifolds approximated numerically as meshes. 
Everything described is constructive and fully compatible with the modern automatic-differentiation-based machine learning toolkit, and most operations for constructing and/or sampling from specialized priors \cite{vanderwilk20,lange2018algorithmic, lange2021linearly}, including on spaces such as the sphere where specific analytic tools are available \cite{creasey2018fast, dutordoir2020sparse, emery2019simulating, emery2019turning}.
With these kernels in hand, we thus proceed to study training methods.

\begin{figure}
\begin{tikzpicture}
\node at (-3.5,0) {};
\node[anchor=south] at (0,0) {\includegraphics[scale=0.25]{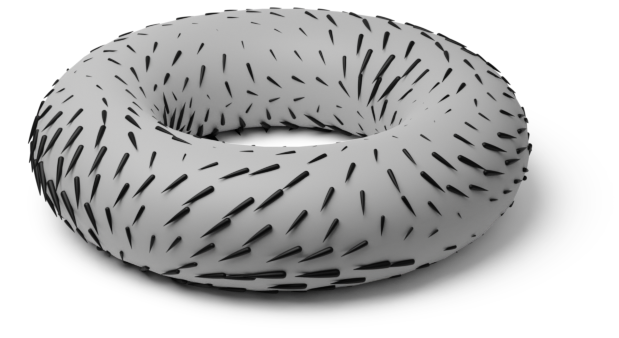}};
\node[anchor=south] at (6,0) {\includegraphics[scale=0.25]{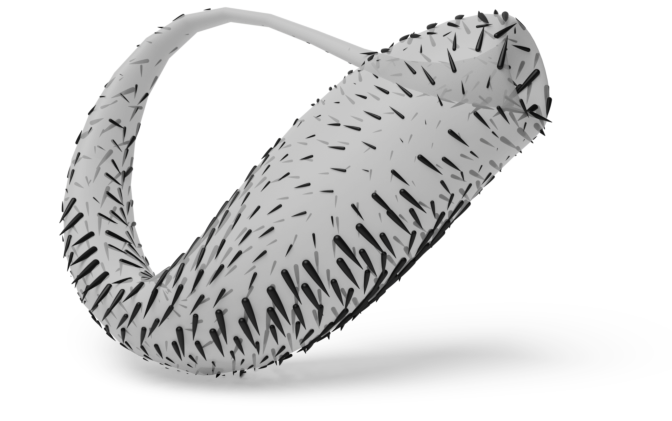}};
\end{tikzpicture}
\caption{Random samples from Gaussian processes with gauge independent projected kernels, on the torus and Klein bottle, respectively. The latter is a non-orientable manifold: the ability to handle such cases highlights the generality of projected kernels.}
\label{fig:example_samples}
\end{figure}
\subsection{Gauge Independent Variational Approximations}

We now discuss variational inference for training GPs in the Riemannian vector field setting.
Approximations, such as the inducing-point framework by \textcite{Titsias2009} and \textcite{hensman13}, approximate the posterior GP with another GP, termed the \emph{variational approximation}.
The latter is typically constructed by specifying a multivariate Gaussian at a set of test locations with a parameterized mean and kernel matrix. For example, \textcite{opper2009variational} consider $\f{N}(\m{m}, \m{S})$, where
\[ \label{eqn:opper_variational}
\m{m} = \m{K}_{(\.)\v{z}}(\m{K}_{\v{z}\v{z}} + \m\Sigma)^{-1} \v\mu
\qquad
\m{S} =  \m{K}_{(\.,\.)} - \m{K}_{(\.)\v{z}}(\m{K}_{\v{z}\v{z}} + \m\Sigma)^{-1} \m{K}_{\v{z}\v{z}} (\m{K}_{\v{z}\v{z}} + \m\Sigma)^{-1} \m{K}_{\v{z}(\.)}.
\]
The variational parameters include a set of inducing locations $\v{z}$, a mean vector $\v\mu$, and a block-diagonal cross-covariance matrix $\m\Sigma$.
Training proceeds by optimizing these parameters to minimize the Kullback--Leibler divergence of the variational distribution from the true posterior, typically using mini-batch stochastic gradient descent.

In the last decade, a wide and diverse range of inducing point approximations suited for many different settings have been proposed \cite{Titsias2009,opper2009variational,lazaro2009inter,vanderwilk20,wu2021hierarchical}.
The vast majority of them employ coordinate-dependent matrix-vector expressions.
This raises the question, which of these constructions can be adapted to define valid variational approximations in the vector field setting?

To proceed, one can choose a frame and formulate a given variational approximation using matrices defined with respect to this frame.
To ensure well-definedness, one must ensure that all these matrices, such as the kernel matrix and the variational parameter $\m\Sigma$ in \eqref{eqn:opper_variational}, are gauge independent.
These considerations can be simplified adopting the pathwise view of GPs, and examining the random variables directly.
For example, the variational approximation of \textcite{opper2009variational} shown previously in \eqref{eqn:opper_variational} can be reinterpreted pathwise as the GP
\[
(f\given y)(\.) &\approx f(\.) + \m{K}_{(\.)\v{z}}(\m{K}_{\v{z}\v{z}} + \m\Sigma)^{-1} (\v\mu - f(\v{z}) - \v\eps)
&
\v\eps &\~[N](\v{0},\m\Sigma)
\]
where we view the matrices $\m{K}_{(\.)\v{z}}, \m{K}_{\v{z}\v{z}}, \m\Sigma$ as \emph{linear operators} between direct sums of tangent spaces: $\m{K}_{(\.)\v{z}} : T_{z_1}X \oplus \ldots \oplus T_{z_m}X \-> T_{(\.)}X$ and $\m{K}_{\v{z}\v{z}}, \m\Sigma : T_{z_1}X \oplus \ldots \oplus T_{z_m}X \-> T_{z_1}X \oplus \ldots \oplus T_{z_m}X$.
By virtue of being defined at the level of vector fields using components that are all intrinsically valid, the posterior covariance of the resulting variational approximation is \emph{automatically} gauge independent.
Hence, checking gauge independence is then equivalent to deducing the domains and ranges of these operators from their coordinate representations, and checking if they are compatible.
This applies to any variational family that can be constructed in the given manner.

The vast majority of inducing point constructions can be interpreted in this manner and thus extend readily to the Riemannian vector field setting by simply representing the necessary matrices in a chosen frame.
In particular, the classical approach of \textcite{Titsias2009} is gauge independent.

\section{Illustrated Examples}
\label{sec:examples}

Here, we showcase a number of examples that illustrate potential use cases of the models developed.

\subsection{Dynamical Systems Modeling}

\begin{figure}
\centering
\includegraphics{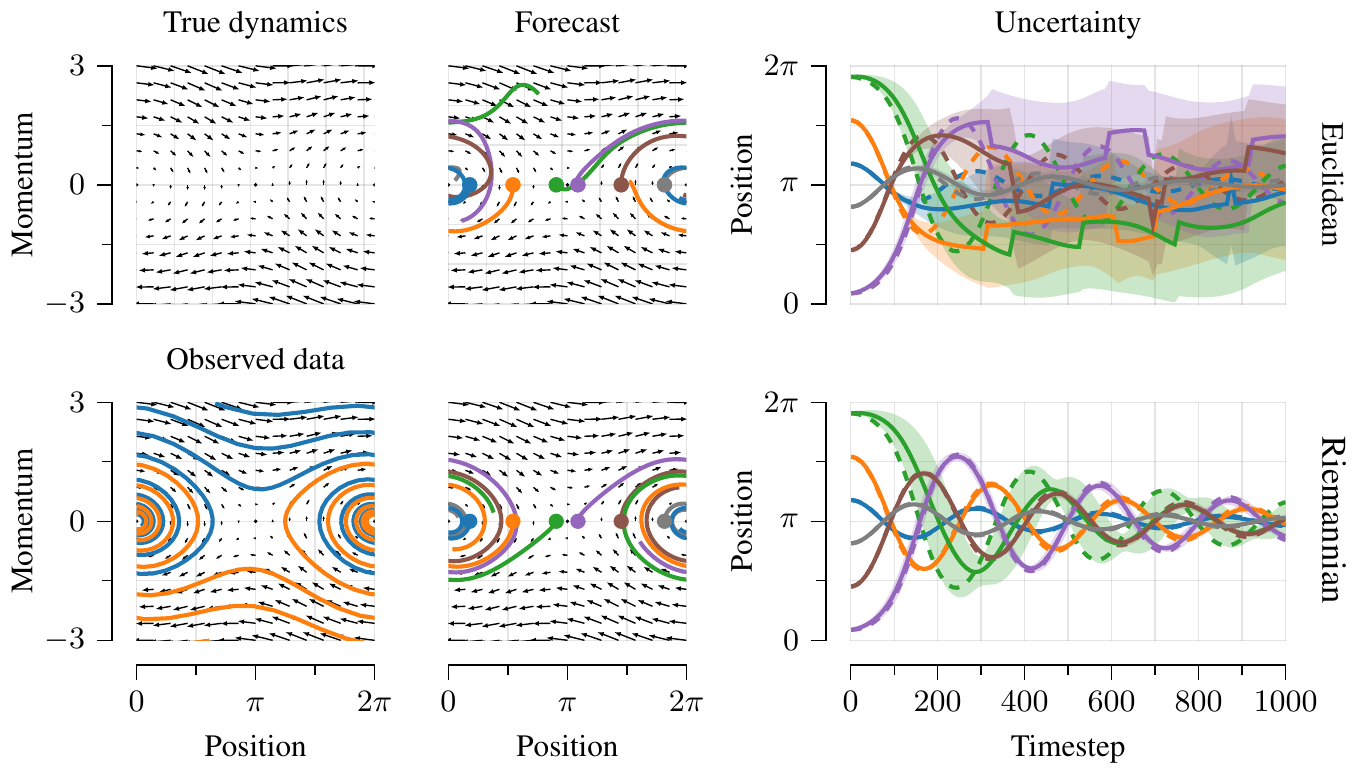}
\caption{Upper left: pendulum with friction state space. Lower left: Two rollouts used to train the GP. Upper middle: State space plot of rollouts from a standard Euclidean GP condtioned on the training data. Upper right: temporal plot of rollouts from a standard Euclidean vector GP. Solid line is the true rollout, dashed line and shade is the mean and $\pm$ 1 std of the GP rollouts. Lower middle and right: same for a geometric manifold vector field kernel on $\mathbb{S}^1\x\R$.}
\label{fig:dynamics}
\end{figure}

Here, we show how Gaussian vector fields can be used to learn the equations of motion of a physical system---an important task in imitation learning, model-based reinforcement learning, and robotics.
GPs are an attractive model class in this area owing to their ability to represent and propagate uncertainty, which enables them to separate what is known about an environment from what is not, thereby driving data-efficient exploration.

For a prototype physical system, we consider an ideal pendulum, whose configuration space is the circle $\bb{S}^1$, representing the angle of the pendulum, with zero being at the bottom of the loop, and whose position-momentum state-space is the cylinder $\bb{S}^1\x \R$.
We consider conservative dynamics with additional friction applied at the pivot. Since this system is non-conservative, we cannot just learn the Hamiltonian of the system, but must learn the vector field over the state space that defines the dynamics of the system. The true dynamics of the system are given by the differential equations
\begin{align}
\mathcal{H} &= \frac{p^2}{2ml^2} + mgl(1 - \cos(q)) 
&
\od{q}{t} &= \pd{\mathcal{H}}{p} 
&
\od{p}{t} &= -\pd{\mathcal{H}}{q} - \frac{b}{m} p,
\end{align} 
where $\mathcal{H}$ is the Hamiltonian of the system defining the conservative part of the dynamics, $q$ and $p$ are the position and momentum of the pendulum, $m$ is the mass, $l$ is the length, $g$ is the gravitational field strength and $b$ is a friction parameter.
Experimental details can be found in Appendix \ref{apdx:experiments}.

To learn this model, we initialise the system at two start points, and evolve the system using leapfrog integration. From these observations of position, we backward Euler integrate the momentum of the system, and from these position-momentum trajectories we estimate observations of the dynamics field. Using these observations, we condition a sparse GP. The result is an estimate of the system dynamics with suitable uncertainty estimates. In order to compute rollouts of these dynamics, we use pathwise sampling of this sparse GP \cite{wilson2020pathwise,wilson2021pathwise} for speed together with leapfrog integration.

Results can be seen in Figure \ref{fig:dynamics}.
While the Euclidean GP performs reasonably well at the start of the rollouts, once the trajectory crosses the discontinuity caused by looping the angle back around to zero, the system starts to make incoherent predictions: this is due to the discontinuity arising from wrap-around condition of the angle. 
The manifold vector-valued GP does not have this issue as the learned and sampled dynamics fields are continuous throughout the state-space.

\subsection{Weather Modeling}\label{sec:weather-modeling}

\begin{figure}
\begin{tikzpicture}
\node at (0,0) {\includegraphics[scale=0.25]{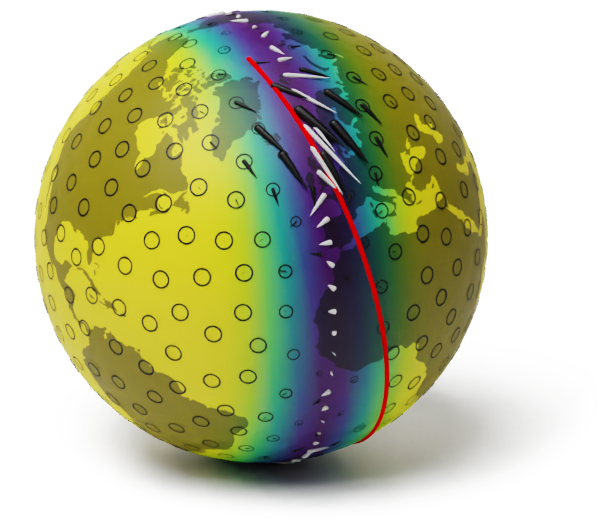}};
\node at (6.5,0.12) {\includegraphics[scale=0.25]{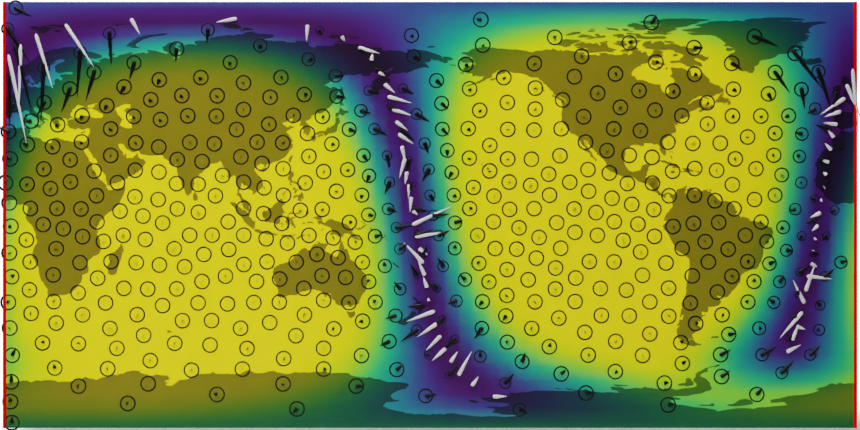}};
\node at (0,4.5) {\includegraphics[scale=0.25]{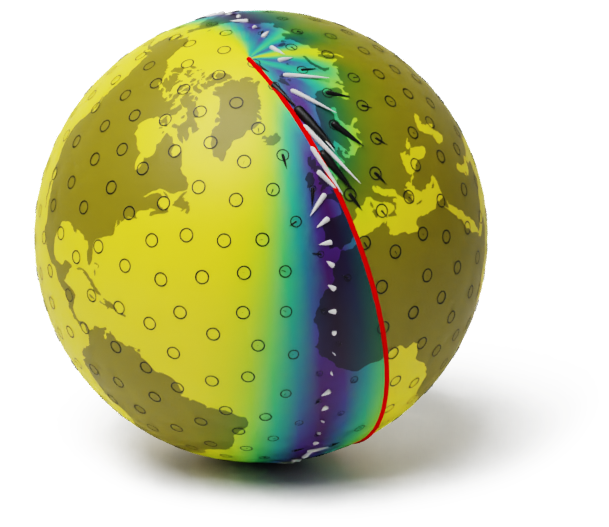}};
\node at (6.5,4.62) {\includegraphics[scale=0.25]{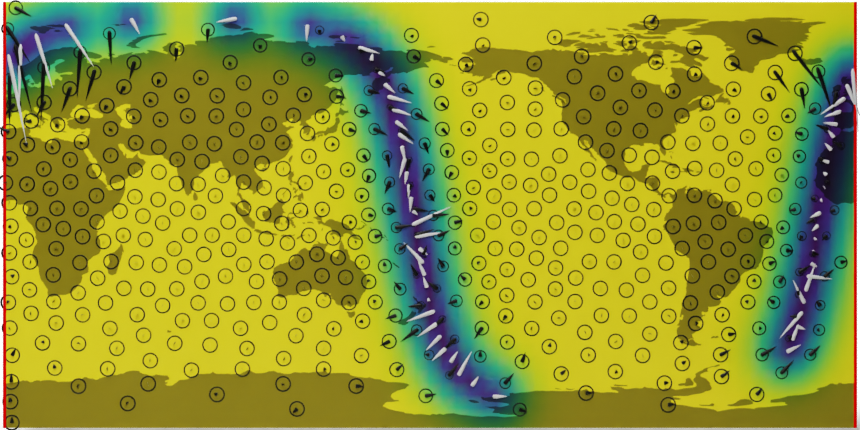}};
\end{tikzpicture}
\caption{Top row: Euclidean GP trained on wind measurements along the chosen Aeolus satellite trajectory, viewed as deviation from normal with respect to the historical average vector field. White arrows are the satellite measurements, black arrows and ellipsoids are the posterior mean and cross-covariance of the vector field, colors indicate the posterior standard deviation norm, and the solid red line indicates the latitudinal boundary when the sphere is projected onto the plane using the lat/lon projection. Bottom row: Same as above except using a manifold kernel on $\mathbb{S}^2$. 
}
\label{fig:aeolus}
\end{figure}

In this experiment, we show how vector-valued GPs on manifolds can be used in the context of meteorology, where geometric information often plays an important role in accurately modeling global weather fields \cite{weyn2020improving, scher2020spherical, llorens2020}. 
\emph{Data assimilation} in numerical weather forecasting refers to the practice of using observed data to update predictions of the atmosphere closer to the truth.
Uncertainty plays a critical role here: it is not usually possible to observe the weather at all locations on the globe simultaneously, and taking into account observation uncertainty is crucial in numerical weather forecasting during the data assimilation step \cite{law2015data, reich2015probabilistic}.
In this section, we explore Gaussian processes as a tool for carrying out global interpolation of wind fields, while simultaneously performing uncertainty quantification, mirroring \emph{optimal interpolation} techniques in data assimilation \cite{kalnay2003atmospheric}.

We consider a simplified setting, where the goal is to interpolate the wind velocity observed by the Aeolus satellite \cite{reitebuch2012spaceborne}, which uses LiDAR sensors to measure wind velocity directly.
To mimic this setting, we use an hour of the Aeolus satellite track during the period 2019/01/01 09:00-10:00 for the input locations and the wind velocity data (10m above ground) from the ERA5 atmospheric reanalysis data \cite{hersbach2016era5} interpolated at these locations, to simulate measurements taken from the Aeolus satellite.
We subtract the weekly historical average wind velocity from the observations, before training the GP models, where the historical mean is computed from the hourly wind data ($10$\,m above ground) from the WeatherBench dataset \cite{rasp2020weatherbench}, available from 1979--2018.
Further details can be found in Appendix \ref{apdx:experiments}.
We compare the results of a Mat\'ern-3/2 manifold vector-valued GP regression model fitted on the wind anomaly observations along the Aeolus trajectory, with the results from a Euclidean Mat\'ern-3/2 multi-output GP trained on the same data, except projected onto a latitude-longitude map.

Results are shown in Figure \ref{fig:aeolus}, where the benefits of using a manifold vector-valued GP become clear. 
When the satellite crosses the left/right boundary in the lat/lon projection, the outputs from the Euclidean vector-valued GP give rise to a spurious discontinuity in the uncertainty along the solid pink line. 
In addition, predictions become less certain in the Euclidean case as the satellite approaches the poles, which is simply an artifact of the distortion caused by projecting the spherical data onto the plane.
By construction, the manifold vector-valued GP is able to avoid both of these issues, resulting in a more realistic prediction with much more uniform uncertainty along the satellite trajectory from pole to pole. In addition, the predictions from the manifold GP are more certain overall, due to the useful structural bias embedded in the kernel.

\section{Conclusion}
In this paper, we propose techniques that generalize Gaussian processes to model vector fields on Riemannian manifolds.
This is done by first providing a well-defined notion of such processes on manifolds
and then introducing an explicit method to construct them in a way that respects the underlying geometry. 
By virtue of satisfying the key condition of gauge independence, our construction is coordinate-free and thus meaningful on manifolds.
In addition to this, we extend standard Gaussian process training methods, such as variational inference, to this setting, and verify that such methods are also compatible with gauge independence. 
This theoretical work gives practitioners additional tools for stochastic modeling of vector fields on manifolds. 
As such, its societal impact will be mainly determined by the applications that belong to the domain of future work.
We demonstrate our techniques on a series of examples in modeling dynamical systems and weather science, and show that incorporating geometric structural bias into probabilistic modeling is beneficial in these settings to obtain coherent predictions and uncertainties.

\section{Acknowledgements and Funding Disclosure}
Michael is supported by the EPSRC Centre for Doctoral Training in Modern Statistics and Statistical Machine Learning (EP/S023151/1).
Section~\ref{sec:model construction} and Appendix~\ref{apdx:theory} of this paper were solely financially supported by the RSF grant N\textsuperscript{\underline{o}}21-11-00047 (\url{https://rscf.ru/en/project/21-11-00047/}), but was contributed to by all authors in equal proportion with the remaining work.
Other sections were supported by the remaining funding. 
We also acknowledge support from Huawei Research.
We thank Peter Dueben for useful suggestions in Section \ref{sec:weather-modeling}, including, in particular, making us aware of the Aeolus satellite.
We are grateful to Nick Sharp for introducing us to the process used for making the three-dimensional figures appearing in this work.

\printbibliography

\newpage

\appendix

\section{Theory}
\label{apdx:theory}

\subsection*{Preliminaries on Gaussian measures}

Since we are working in a setting beyond $\R^d$, we need a suitable notion of a multivariate Gaussian that can be employed in a coordinate-free manner.
We employ the notion of a Gaussian in the sense of duality, given below.
These notions are standard and classical, but since they are not well-known in machine learning, and for completeness, we prove the necessary properties ourselves.

\begin{definition}
\label{def:gauss-dual}
Let $(\Omega,\c{F},\P)$ be a probability space.
Let $V$ by a finite-dimensional real topological vector space, equipped with the standard topology, Borel $\sigma$-algebra, and the canonical pairing $\dualprod{\.}{\.} : V^* \x V \-> \R$ with its topological dual $V^*$.
A random vector $v : \Omega \-> V$ is called \emph{Gaussian} if, for all $\phi \in V^*$, the random variable $\dualprod{\phi}{v} : \Omega \-> \R$ is univariate Gaussian.
\end{definition}

\paragraph{Remark.} It is not hard to show that in the setting of the definition above, the random variables~$\dualprod{\phi_1}{v}, \ldots, \dualprod{\phi_k}{v}$ are jointly Gaussian for any finite collection $\phi_1, \ldots, \phi_k \in V^*$.
Indeed, this is equivalent to the Gaussianity of every linear combination $\alpha_1 \dualprod{\phi_1}{v} + \ldots + \alpha_k \dualprod{\phi_k}{v} = \dualprod{\alpha_1 \phi_1 + \ldots + \alpha_k \phi_k}{v}$, which is also ensured by the definition since $\alpha_1 \phi_1 + \ldots + \alpha_k \phi_k \in V^*$.

We begin by showing that a Gaussian random vector in the sense of duality is characterized by a mean and a covariance, just like Gaussians in the standard, coordinate-dependent sense, starting with defining appropriate analogs of both notions in this setting.

\begin{lemma} \label{thm:generalized_moments1}
For every Gaussian random vector $v$, there is a unique vector $\mu \in V$ and unique symmetric positive semi-definite bilinear form $k : V^* \x V^* \-> \R$ such that for all $\phi \in V^*$, we have $\E \dualprod{\phi}{v} = \dualprod{\phi}{\mu}$ and $k(\phi, \psi) = \Cov(\dualprod{\phi}{v},\dualprod{\psi}{v})$.
We say that $\mu$ is its \emph{mean} and $k$ is its \emph{covariance form}, and write $v \~[N](\mu, k)$.
\end{lemma}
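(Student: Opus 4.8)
The plan is to extract $\mu$ and $k$ directly from the univariate data $\{\dualprod{\phi}{v} : \phi \in V^*\}$, exploiting that every $\dualprod{\phi}{v}$ is univariate Gaussian and therefore has finite mean and variance. First I would construct the mean. Since $\dualprod{\phi}{v}$ is integrable for each $\phi$, the assignment $L : \phi \mapsto \E\dualprod{\phi}{v}$ is well-defined, and by linearity of expectation together with the linearity of the pairing in its first slot, $L : V^* \-> \R$ is a linear functional. Because $V$ is finite-dimensional, the canonical evaluation map $J : V \-> V^{**}$, $J(w) = \dualprod{\.}{w}$, is an isomorphism; hence there is a unique $\mu \in V$ with $L = J(\mu)$, i.e. $\E\dualprod{\phi}{v} = \dualprod{\phi}{\mu}$ for all $\phi \in V^*$. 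Uniqueness of $\mu$ is forced by the fact that $V^*$ separates points of $V$: if two vectors induce the same functional $L$, their difference is annihilated by every $\phi$ and so vanishes.

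Next I would construct the covariance form. Each $\dualprod{\phi}{v}$ is square-integrable, since a univariate Gaussian has finite second moment, so by Cauchy--Schwarz the quantity $k(\phi,\psi) := \Cov(\dualprod{\phi}{v}, \dualprod{\psi}{v})$ is finite and the map $k : V^* \x V^* \-> \R$ is well-defined. It remains to check the three required properties. Symmetry is inherited from the symmetry of covariance. Bilinearity follows by combining the bilinearity of covariance with the linearity of the pairing in its first argument: for instance $\dualprod{\alpha\phi_1 + \beta\phi_2}{v} = \alpha\dualprod{\phi_1}{v} + \beta\dualprod{\phi_2}{v}$, and covariance is linear in each entry. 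Positive semi-definiteness is immediate since $k(\phi,\phi) = \Cov(\dualprod{\phi}{v}, \dualprod{\phi}{v})$ is the variance of a real random variable, hence nonnegative. Uniqueness of $k$ needs no separate argument, as the defining relation already prescribes the value of $k$ at every pair $(\phi,\psi)$.

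I expect this lemma to be essentially bookkeeping rather than hard analysis: the two genuine inputs are the finiteness of the first and second moments of $\dualprod{\phi}{v}$, which the univariate-Gaussian hypothesis supplies for free, and finite-dimensional reflexivity $V \cong V^{**}$, which is what allows a linear functional on $V^*$ to be realized by an honest vector $\mu \in V$. The one point I would take care to state cleanly is that this reflexivity together with $V^*$ separating points is precisely what yields both the existence and the uniqueness of $\mu$; this is the step that would genuinely require more work in an infinite-dimensional setting. Measurability of each $\dualprod{\phi}{v}$ is not an obstacle, being the composition of the measurable map $v$ with the continuous, hence Borel, functional $\dualprod{\phi}{\.}$.
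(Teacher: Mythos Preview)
Your argument is correct and matches the paper's approach almost exactly: both use finite-dimensional reflexivity to realize the linear functional $\phi \mapsto \E\dualprod{\phi}{v}$ as a unique vector $\mu \in V$, and both define $k$ directly via $\Cov(\dualprod{\phi}{v},\dualprod{\psi}{v})$ and note bilinearity and positive semi-definiteness. Your write-up is in fact a bit more careful than the paper's, spelling out integrability, square-integrability, and the uniqueness of $k$ explicitly.
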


\begin{proof}
Consider the map $\E\dualprod{\.}{v} : V^* \-> \R$.
This map is a linear functional on the space $V^*$.
Since $V$ is finite-dimensional, $V$ is reflexive, so there is exactly one vector $\mu \in V$ such that 
\[
\dualprod{\phi}{\mu} = \E\dualprod{\phi}{v}
\]
for all $\phi\in V^*$. 
Next, define $k$ as
\[
k(\phi,\psi) = \Cov(\dualprod{\phi}{v},\dualprod{\psi}{v})
\]
for all $\phi,\psi\in V^*$. Clearly, $k$ is bilinear and positive semi-definite, that is $k(\phi, \phi) \geq 0$ for all $\phi \in V^*$. Thus the claim follows.
\end{proof}

This tells us that every Gaussian random vector admits a mean and covariance: we now show that such Gaussians exist and are uniquely determined by this pair.
Recall that for a measure $\pi$, and a measurable function $\phi$, the \emph{pushfoward measure} $\phi_* \pi$ is defined as $(\phi_*\pi)(A) = \pi(\phi^{-1}(A))$ for all measurable sets $A$.

\begin{lemma} \label{thm:generalized_moments2}
For any vector $\mu \in V$ and any positive semi-definite bilinear form  $k: V^* \x V^* \-> \R$, there exists a random vector $v \~[N](\mu, k)$.
Moreover, if $w: \Omega \-> V$ is another Gaussian random vector in the sense of Definition~\ref{def:gauss-dual} with $w \~[N](\mu, k)$, then $v$ and $w$ are identically distributed.
\end{lemma}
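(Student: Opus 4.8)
The plan is to prove the two claims—existence and uniqueness—separately, in each case transporting the problem to the familiar finite-dimensional Euclidean theory by fixing a basis. First I would fix a basis $e_1, \ldots, e_n$ of $V$ and let $\phi_1, \ldots, \phi_n \in V^*$ be the dual basis, so that $\dualprod{\phi_i}{e_j} = \delta_{ij}$. This identifies $V$ with $\R^n$ through the coordinate map and lets me encode the data $(\mu, k)$ as a mean vector $m \in \R^n$ with $m_i = \dualprod{\phi_i}{\mu}$, together with a matrix $\m\Sigma \in \R^{n \x n}$ given by $\Sigma_{ij} = k(\phi_i, \phi_j)$. Symmetry and positive semi-definiteness of the form $k$ make $\m\Sigma$ a symmetric positive semi-definite matrix, exactly the data needed to invoke the classical Gaussian construction in $\R^n$.

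For existence, I would take a square root $\m\Sigma = \m{A}\m{A}^T$ and a vector $\v{Z}$ of independent standard normals, so that $\v\xi = \v{m} + \m{A}\v{Z}$ has law $\mathrm{N}(\v{m}, \m\Sigma)$ on $\R^n$, and then set $v = \sum_i \xi_i e_i$. To verify $v \~[N](\mu, k)$ in the sense of duality, I would expand an arbitrary $\phi = \sum_i a_i \phi_i \in V^*$ and observe that $\dualprod{\phi}{v} = \sum_i a_i \xi_i$ is a fixed linear combination of jointly Gaussian variables, hence univariate Gaussian. A short computation then gives $\E\dualprod{\phi}{v} = \sum_i a_i m_i = \dualprod{\phi}{\mu}$ and, using bilinearity of both $\Cov$ and $k$, $\Cov(\dualprod{\phi}{v}, \dualprod{\psi}{v}) = \sum_{i,j} a_i b_j \Sigma_{ij} = k(\phi, \psi)$ for $\psi = \sum_j b_j \phi_j$, which identifies the mean and covariance form as required by Lemma~\ref{thm:generalized_moments1}.

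For uniqueness, the natural tool is the characteristic functional $\phi \mapsto \E \exp(i\dualprod{\phi}{v})$. Since $\dualprod{\phi}{v}$ is univariate Gaussian with mean $\dualprod{\phi}{\mu}$ and variance $k(\phi,\phi)$—both depending only on the pair $(\mu, k)$—the one-dimensional characteristic function formula yields $\E\exp(i\dualprod{\phi}{v}) = \exp\!\del{i\dualprod{\phi}{\mu} - \tfrac12 k(\phi,\phi)}$, and the identical formula holds for $w$ because $w$ has the same mean and covariance form. Thus $v$ and $w$ share the same characteristic functional. Reading this through the coordinate identification $V \cong \R^n$, in which a covector $\sum_i t_i \phi_i$ corresponds to the frequency $\v{t} \in \R^n$, the ordinary $\R^n$ characteristic functions of $v$ and $w$ agree everywhere, so by the uniqueness theorem for characteristic functions they induce the same Borel measure on $\R^n$, and hence $v$ and $w$ are identically distributed on $V$.

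I expect the only genuinely delicate point to be this last invocation of characteristic-function uniqueness: one must confirm that the duality-based characteristic functional coincides with the classical $\R^n$ characteristic function under the chosen identification, and that the resulting law on $V$ does not depend on the basis used to transport the measure. Both points are routine once the coordinate dictionary between $V^*$ and $\R^n$ is written out explicitly, and everything else reduces to bookkeeping with linearity.
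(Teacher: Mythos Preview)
Your proof is correct and follows essentially the same strategy as the paper: fix a basis, encode $(\mu,k)$ as a pair $(\v{m},\m\Sigma)$ in $\R^n$, build the Gaussian there, and pull it back to $V$, then verify the mean and covariance form by expanding in the dual basis. The only notable difference is in the uniqueness step: the paper pushes both $v$ and $w$ forward along the basis isomorphism $\c{E}:V\to\R^n$, observes (by reversing the existence computation) that both pushforwards are classical Gaussians $\mathrm{N}(\v{m},\m\Sigma)$, and concludes they coincide; you instead compute the characteristic functional $\phi\mapsto\E\exp(i\dualprod{\phi}{v})$ directly and invoke characteristic-function uniqueness on $\R^n$. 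Both routes are standard and equally short; yours has the mild advantage of not needing to re-identify the pushforward of $w$ as a Euclidean Gaussian, since the characteristic functional is visibly determined by the one-dimensional laws of $\dualprod{\phi}{\cdot}$ alone.
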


\begin{proof}
Choose a basis $(e_i)$ on $V$, and let $(e^i)$ be the dual basis.
Define the vector $\v\mu \in \R^d$ and matrix $\m{K} \in \R^{d\x d}$ by
\[
\v\mu &= \begin{bmatrix}
\dualprod{e^1}{\mu}
\\
\vdots
\\
\dualprod{e^d}{\mu}
\end{bmatrix}
&
\m{K} &= \begin{bmatrix}
k(e^1, e^1) & \hdots & k(e^1, e^d)
\\
\vdots & \ddots & \vdots
\\
k(e^d, e^1) & \hdots & k(e^d, e^d)
\end{bmatrix}  
.  
\]
By positive semi-definiteness of $k$, the matrix $\m{K}$ is a positive semi-definite matrix, so there exists a random vector $\v{v}\~[N](\v\mu,\m{K})$ in the classical Euclidean sense.
Let $\c{E} : V \-> \R^n$ be the continuous linear isomorphism induced by the basis and define
\[
v = \c{E}^{-1}\v{v}.
\]

We claim that (a) $v$ is Gaussian, that is, if we test it against any covector, we obtain a univariate Gaussian, (b) the mean vector of $v$ is $\mu$, and (c) the covariance form of $v$ is $k$. To show (a), let $v^i$ denote the components of $\v{v}$ (scalar Gaussian random variables) so that $v = \sum_{i=1}^d v^i e_i$ and for any $\phi \in V^*$, write $\phi = \sum_{i=1}^d \phi_i e^i$, where $\phi_i = \dualprod[1]{\phi}{e_i}$. Then we have
\[
\dualprod[1]{\phi}{v} = \dualprod{\sum_{i=1}^d \phi_i e^i}{\sum_{j=1}^d v^j e_j} = \sum_{i=1}^d \sum_{j=1}^d \phi_i v^j \ubr{\dualprod[1]{e^i}{e_j}}_{\delta_{ij}} = \sum_{i=1}^d \phi_i v^i.
\]
Since each $v^i$ is a univariate Gaussian, the linear combination on the right hand side is also a univariate Gaussian, which proves (a). To prove (b) and (c), we see that for any $\phi \in V^*$,
\[
\E \dualprod[1]{\phi}{v}
&= \E \dualprod[1]{\phi}{\sum_{i=1}^d v^i e_i}
=
\E \sum_{i=1}^d v^i \dualprod{\phi}{e_i}
\\
&=
\sum_{i=1}^d \ubr{\del{\E v^i}}_{\dualprod{e^i}{\mu}} \dualprod{\phi}{e_i}
=
\dualprod[1]{\phi}{\sum_{i=1}^d \dualprod[0]{e^i}{\mu} e_i}
=
\dualprod{\phi}{\mu}.
\]
Thus $v$ has the right mean. Now take an additional $\psi \in V^*$ and write
\[
\Cov\del[2]{\dualprod[1]{\phi}{v}, \dualprod[1]{\psi}{v}}
&=
\E
\del{
\del{
\dualprod[1]{\phi}{v}
-
\dualprod[1]{\phi}{\mu}
}
\del{
\dualprod[1]{\psi}{v}
-
\dualprod[1]{\psi}{\mu}
}
}
\\
&=
\E
\del[4]{
\sum_{i=1}^d \del{v^i - \dualprod{e^i}{\mu}} \dualprod{\phi}{e_i}
}
\del[4]{
\sum_{j=1}^d \del{v^j - \dualprod{e^j}{\mu}} \dualprod{\psi}{e_j}
}
\\
&=
\sum_{i=1}^d \sum_{j=1}^d
\dualprod{\phi}{e_i}
\ubr{\E \del{\del{v^i - \dualprod[0]{e^i}{\mu}} \del{v^j - \dualprod[0]{e^j}{\mu}}}}_{k(e^i, e^j)}
\dualprod{\psi}{e_j}
\\
&=
k\del[4]{\sum_{i=1}^d \dualprod{\phi}{e_i} e^i, \sum_{j=1}^d \dualprod{\psi}{e_j} e^j}
=
k(\phi, \psi),
\]
hence $v$ has the right covariance form.

Now let $w : \Omega \-> V$ be another Gaussian random vector with $w \~[N](\mu,k)$, and let $\pi_w$ be its pushforward measure.
Similarly, let $\pi_v$ be the pushforward measure of $v$.
Reversing the above argument, we see that pushforwards of measures $\pi_v$ and $\pi_w$ through $\c{E}$, which we denote by $\pi_{\v{v}}$ and $\pi_{\v{w}}$, are both Gaussian distributions (in the classical sense) in $\R^d$ with the same mean vectors $\v{\mu}$ and covariance matrices $\m{K}$.
Hence $\pi_{\v{v}} = \pi_{\v{w}}$ in distribution, but since $\c{E}$ is a measurable space isomorphism,\footnote{A measurable space isomorphism is a measurable bijection with a measurable inverse.} we have $\pi_v = \pi_w$, which proves the claim.
\end{proof}

Lemmas~\ref{thm:generalized_moments1}~and~\ref{thm:generalized_moments2} show that a pair $\mu, k$ defines a unique probability distribution on $V$ which we call the Gaussian distribution with mean vector $\mu$ and covariance form $k$ on the vector space $V$ and denote by $\operatorname{N}(\mu, k)$.
This establishes a notion of Gaussianity that is suitable and natural for describing finite-dimensional marginals in a coordinate-free manner.

\subsection*{Existence and uniqueness (Proof of Theorem \ref{prop:riem-kern})}

Here, we prove that Gaussian vector fields exist and are uniquely determined by their mean vector field and cross-covariance kernel.
Our goal now is, from a cross-covariance kernel, to construct a projective family of finite-dimensional marginals.

\begin{definition}[Preliminaries]
Let $X$ be a smooth manifold. 
Let
\[
\Gamma_{\f{nns}}(TX) = \{f : X \-> TX : \proj_X \after f = \id_X\}
\]
be the vector space of not necessarily smooth sections.
\end{definition}

\begin{definition}[Cross-covariance kernel]
A symmetric function $k : T^*X \x T^*X \-> \R$ is called 
\emph{fiberwise bilinear} if at any pair of points $x, x' \in X$, we have
\[
k(\lambda \alpha_x + \mu \beta_x, \gamma_{x'}) &= \lambda k(\alpha_x, \gamma_{x'}) + \mu k(\beta_x, \gamma_{x'})
\]
for any $\alpha_x, \beta_x \in T^*_x X$, $\gamma_{x'} \in T^*_{x'} X$ and $\lambda, \mu \in \R$, 
where we note by symmetry that the same requirement applies to its second argument.
A fiberwise bilinear function $k$ is called \emph{positive semi-definite} if for any set of covectors $\alpha_{x_1}, \ldots, \alpha_{x_n} \in T^*X$, we have
\[ \label{eqn:fiber_pd}
\sum_{i=1}^n\sum_{j=1}^n k(\alpha_{x_i}, \alpha_{x_j}) \geq 0
.
\]
We call a symmetric fiberwise bilinear positive semi-definite function a \emph{cross-covariance kernel}.
\end{definition}

We show in the following example that this definition of the cross-covariance kernel is compatible with the notion of matrix-valued kernels used in classical vector-valued GPs and extends it naturally.

\begin{example}[Euclidean case]
Consider $X = \R^d$ with a fixed inner product and an orthonormal basis, under which $\R^d$ is identified with $\del{\R^d}^*$.
Consider a matrix-valued kernel $\kappa: \R^d \x \R^d \-> \R^{d \x d}$ in the standard sense.
Let $k((x, v), (x', v')) = v^T \kappa(x, x') v'$. Then $k: T^* \R^d \x T^* \R^d \-> \R$ is a cross-covariance kernel in the above sense.

Indeed, $k$ is symmetric and fiberwise bilinear. Moreover, since $\kappa$ is positive semi-definite in the regular sense, we have that for arbitrary $x_1, \dots, x_n \in \R^d$, the $n d \x n d$ matrix
\[
\Gamma(x_1, \dots, x_n)
=
\begin{bmatrix}
\kappa(x_1, x_1) & \dots  & \kappa(x_1, x_n) \\
\vdots           & \ddots & \vdots \\
\kappa(x_n, x_1) & \dots  & \kappa(x_n, x_n)
\end{bmatrix}
\]
is positive semi-definite, meaning that for an arbitrary collection $v_1, \dots, v_n \in \R^d$, we have
\[
0 \leq
\begin{bmatrix}
v_1^T & \dots & v_n^T
\end{bmatrix}
\begin{bmatrix}
\kappa(x_1, x_1) & \dots  & \kappa(x_1, x_n) \\
\vdots           & \ddots & \vdots \\
\kappa(x_n, x_1) & \dots  & \kappa(x_n, x_n)
\end{bmatrix}
\begin{bmatrix}
v_1 \\
\vdots \\
v_n
\end{bmatrix}
=
\sum_{i=1}^n \sum_{j=1}^n \ubr{v_i^T \kappa(x_i, x_j) v_j}_{k((x_i, v_i), (x_j, v_j))}.
\]
Condition \eqref{eqn:fiber_pd} thus follows, proving that this is a valid cross-covariance kernel.
\end{example}

We proceed to introduce the system of coordinate-free finite-dimensional marginals that will be used to construct the vector-valued GP.

\begin{definition}
Let $\mu \in \Gamma_{\f{nns}}(TX)$ and $k : T^*X \x T^*X \-> \R$ be a cross-covariance kernel.
For any $x_1,\ldots,x_n \in X$, let $V_{x_1,\ldots,x_n} = T_{x_1} X \oplus \ldots \oplus T_{x_n} X$ and $V^*_{x_1,\ldots,x_n} = T^*_{x_1} X \oplus \ldots \oplus T^*_{x_n} X$.
Define $\mu_{x_1,\ldots,x_n} \in V_{x_1,\ldots,x_n}$ and $k_{x_1,\ldots,x_n} : V_{x_1,\ldots,x_n}^* \x V_{x_1,\ldots,x_n}^* \-> \R$
by
\[
\mu_{x_1,\ldots,x_n} &= (\mu(x_1),\ldots,\mu(x_n))
&
k_{x_1,\ldots,x_n}(\alpha,\beta) &= \sum_{i=1}^n\sum_{j=1}^n k(\alpha_{x_i}, \beta_{x_j})
\]
for any $\alpha = (\alpha_{x_1}, \ldots, \alpha_{x_n})$, $\beta = (\beta_{x_1}, \ldots, \beta_{x_n}) \in V^*_{x_1, \ldots, x_n}$.
We denote $\pi_{x_1,\ldots,x_n} = \operatorname{N}(\mu_{x_1,\ldots,x_n}, k_{x_1,\ldots,x_n})$ the system of marginals induced by $k$.
\end{definition}

We now prove existence and uniqueness of a measure on $\Gamma_{\f{nns}}(TX)$ from the Gaussian measures defined on $V_{x_1,\ldots,x_n}$ for any $\{x_1, \ldots, x_n\} \subseteq X$.
We do this by means of the general form of the Kolmogorov extension theorem formulated below.
Recall again that for a measure $\pi$, and a measurable function $\phi$, the \emph{pushfoward measure} $\phi_* \pi$ is defined as $(\phi_*\pi)(A) = \pi(\phi^{-1}(A))$ for all measurable sets $A$.

\begin{result}[Kolmogorov Extension Theorem] \label{result:tao}
Let $(X_\alpha,\c{B}_\alpha,\c{O}_\alpha)_{\alpha\in A}$ be a family of measurable spaces, each equipped with a topology.
For each finite $B \subseteq A$, let $\mu_B$ be an inner regular probability measure on $X_B = \prod_{\alpha\in B} X_\alpha$ with $\sigma$-algebra $\c{B}_B$ and with the product topology $\c{O}_B$ obeying
\[\label{project-measure}
\del{\proj_C}_* \mu_B = \mu_C
\]
whenever $C \subseteq B \subseteq A$ are two nested finite subsets of $A$. Here projections $\proj_C: X_B \-> X_C$ are defined by $\proj_C(\cbr{x_\alpha}_{\alpha \in B}) = \cbr{x_\alpha}_{\alpha \in C}$ and $\del{\proj_C}_*$ denotes the pushforward by $\proj_{C}$.
Then there exists a unique probability measure $\mu_A$ on $\c{B}_A$ with the property that $\del{\proj_B}_* \mu_A = \mu_B$ for all finite $B \subseteq A$.
\end{result}

\begin{proof}
\textcite[Theorem 2.4.3]{tao11}.
\end{proof}

By showing the existence of a probability measure on the space $\Gamma_{\f{nns}}(TX)$, one can start speaking about random variables $f : \Omega \-> \Gamma_{\f{nns}}(TX)$ with said measure as their distribution: these are the Gaussian vector fields we seek.
However, in order to apply the above result, we first need to verify condition \eqref{project-measure}. This is done in the following.

\begin{proposition} \label{prop:projective}
The family of measures $(\pi_{x_1,\ldots,x_n})_{\{x_1,\ldots,x_n\}\subseteq X}$ is a projective family in the sense that for any $\{x_1,\ldots,x_m\} \subseteq \{x_1,\ldots,x_n\} \subseteq X$, we have
\[
(\proj_{x_1,\ldots,x_m})_* \pi_{x_1,\ldots,x_n} = \pi_{x_1,\ldots,x_m}
\]
where $\proj_{x_1,\ldots,x_m} : V_{x_1,\ldots,x_n} \-> V_{x_1,\ldots,x_m}$ is the canonical projection induced by the direct sum.
\end{proposition}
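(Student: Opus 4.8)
The plan is to verify the projectivity (Kolmogorov consistency) condition directly by comparing the two Gaussian measures through their defining mean vectors and covariance forms. Since each $\pi_{x_1,\ldots,x_n}$ is a Gaussian in the sense of duality on the finite-dimensional space $V_{x_1,\ldots,x_n}$, by Lemma~\ref{thm:generalized_moments2} it is uniquely determined by its mean and covariance form. Both $(\proj_{x_1,\ldots,x_m})_* \pi_{x_1,\ldots,x_n}$ and $\pi_{x_1,\ldots,x_m}$ are measures on the same space $V_{x_1,\ldots,x_m}$, and the latter is manifestly Gaussian with mean $\mu_{x_1,\ldots,x_m}$ and covariance $k_{x_1,\ldots,x_m}$. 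So it suffices to show two things: first, that the pushforward of a Gaussian under the linear projection $\proj_{x_1,\ldots,x_m}$ is again Gaussian in the sense of duality, and second, that its mean and covariance form agree with those of $\pi_{x_1,\ldots,x_m}$.

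First I would observe that $\proj := \proj_{x_1,\ldots,x_m} : V_{x_1,\ldots,x_n} \to V_{x_1,\ldots,x_m}$ is a continuous linear surjection induced by the direct-sum structure (it simply discards the components $T_{x_{m+1}}X, \ldots, T_{x_n}X$). Its dual map $\proj^* : V^*_{x_1,\ldots,x_m} \to V^*_{x_1,\ldots,x_n}$ is the inclusion that extends a covector $\beta = (\beta_{x_1},\ldots,\beta_{x_m})$ to $(\beta_{x_1},\ldots,\beta_{x_m},0,\ldots,0)$. The key identity is the adjoint relation $\dualprod{\beta}{\proj\, v} = \dualprod{\proj^*\beta}{v}$ for all $\beta \in V^*_{x_1,\ldots,x_m}$ and $v \in V_{x_1,\ldots,x_n}$. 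To establish Gaussianity of the pushforward in the sense of Definition~\ref{def:gauss-dual}, I would take any $\beta \in V^*_{x_1,\ldots,x_m}$ and note that $\dualprod{\beta}{\proj\, v} = \dualprod{\proj^*\beta}{v}$ is univariate Gaussian, because $v$ is Gaussian and $\proj^*\beta \in V^*_{x_1,\ldots,x_n}$; hence $\proj\, v$ is Gaussian.

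Next I would compute the mean and covariance. Using the adjoint relation together with Lemma~\ref{thm:generalized_moments1}, the mean of the pushforward satisfies $\E\dualprod{\beta}{\proj\, v} = \E\dualprod{\proj^*\beta}{v} = \dualprod{\proj^*\beta}{\mu_{x_1,\ldots,x_n}} = \dualprod{\beta}{\proj\, \mu_{x_1,\ldots,x_n}}$, so the mean is $\proj\, \mu_{x_1,\ldots,x_n} = (\mu(x_1),\ldots,\mu(x_m)) = \mu_{x_1,\ldots,x_m}$, as required. Similarly, for the covariance form, given $\beta,\gamma \in V^*_{x_1,\ldots,x_m}$, I would write $\Cov(\dualprod{\beta}{\proj\, v}, \dualprod{\gamma}{\proj\, v}) = \Cov(\dualprod{\proj^*\beta}{v}, \dualprod{\proj^*\gamma}{v}) = k_{x_1,\ldots,x_n}(\proj^*\beta, \proj^*\gamma)$. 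Since $\proj^*$ zero-pads the covectors in indices $m+1,\ldots,n$, the double sum defining $k_{x_1,\ldots,x_n}$ collapses to the sum over $i,j \in \{1,\ldots,m\}$, yielding exactly $k_{x_1,\ldots,x_m}(\beta,\gamma)$.

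Having matched both mean and covariance form, the uniqueness half of Lemma~\ref{thm:generalized_moments2} forces $(\proj_{x_1,\ldots,x_m})_* \pi_{x_1,\ldots,x_n} = \pi_{x_1,\ldots,x_m}$, completing the proof. The argument is essentially bookkeeping once the adjoint relation is in hand; the only step requiring care is confirming that the pushforward of a duality-Gaussian is again a duality-Gaussian rather than merely having Gaussian one-dimensional projections in an incompatible sense — but this is immediate from Definition~\ref{def:gauss-dual}, since the defining property is precisely about one-dimensional projections. I do not expect a genuine obstacle here; the main subtlety is purely notational, namely keeping track of how $\proj^*$ acts as zero-padding so that the telescoping of the covariance double sum is transparent.
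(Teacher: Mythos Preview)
Your proposal is correct and follows essentially the same route as the paper's proof: both establish Gaussianity of the projection via the adjoint identity $\dualprod{\beta}{\proj v} = \dualprod{(\beta,0)}{v}$ (which is exactly your $\proj^*\beta$), then match mean and covariance form using this identity, and conclude by the uniqueness half of Lemma~\ref{thm:generalized_moments2}. The only difference is notational---you name the zero-padding map $\proj^*$ explicitly, whereas the paper writes $(\phi,0)$ directly---but the argument is the same.
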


\begin{proof}
Take two random variables $v_{x_1,\ldots,x_n} : \Omega \-> V_{x_1, \ldots, x_n}$ and $v_{x_1,\ldots,x_m} : \Omega \-> V_{x_1, \ldots, x_m}$ with $v_{x_1,\ldots,x_n} \sim \pi_{x_1,\ldots,x_n}$ and $v_{x_1,\ldots,x_m} \sim \pi_{x_1,\ldots,x_m}$.
It suffices to show that for the random variable $v_{x_1,\ldots,x_n} : \Omega \-> V_{x_1, \ldots, x_m}$ we have
\[
v_{x_1,\ldots,x_m} \stackrel{\d}{=} \proj_{x_1,\ldots,x_m} v_{x_1,\ldots,x_n}
\]
where $\stackrel{\d}{=}$ denotes the equality of distributions.
We first show that $\proj_{x_1,\ldots,x_m} v_{x_1,\ldots,x_n}$ is Gaussian.
Let $\phi\in V^*_{x_1,\ldots,x_m}$ and write 
\[
\dualprod{\phi}{\proj_{x_1,\ldots,x_m} v_{x_1,\ldots,x_n}} = \dualprod{(\phi, 0)}{v_{x_1,\ldots,x_n}}
\]
where $(\phi, 0) \in V^*_{x_1,\ldots,x_n}$ is the natural inclusion of $\phi \in V^*_{x_1,\ldots,x_m}$ in the space $V^*_{x_1,\ldots,x_n}$ by padding with the zero vector over all components of the direct sum whose indices are not $x_1,\ldots,x_m$.
This identity holds for all vectors, hence it holds for random vectors, and $\proj_{x_1,\ldots,x_m} v_{x_1,\ldots,x_n}$ is Gaussian.
Now, we compute its moments: write
\[
\E\dualprod{\phi}{\proj_{x_1,\ldots,x_m} v_{x_1,\ldots,x_n}} &= \E\dualprod{(\phi, 0)}{v_{x_1,\ldots,x_n}} &
\\
&= \dualprod{(\phi, 0)}{\mu_{x_1,\ldots,x_n}} 
\\
&=  \dualprod{\phi}{\proj_{x_1,\ldots,x_m} \mu_{x_1,\ldots,x_n}} 
\\
&= \dualprod{\phi}{\mu_{x_1,\ldots,x_m}}
\]
where the last line follows by definition of $\mu_{x_1,\ldots,x_m}$, and
\[
\Cov (\dualprod{\phi}{\proj_{x_1,\ldots,x_m} v_{x_1,\ldots,x_n}},\,&\dualprod{\psi}{\proj_{x_1,\ldots,x_m} v_{x_1,\ldots,x_n}}) 
\\
&= \Cov \left(\dualprod{(\phi, 0)}{v_{x_1,\ldots,x_n}},\dualprod{(\psi, 0)}{v_{x_1,\ldots,x_n}}\right)
\\
&= k_{x_1,\ldots,x_n}((\phi, 0), (\psi, 0))
\\
&= k_{x_1,\ldots,x_m}(\phi, \psi)
\]
where the last line follows by bilinearity and the definition of $k_{x_1,\ldots,x_m}$.

So far we have shown that $\proj_{x_1,\ldots,x_m} v_{x_1,\ldots,x_n}$ is Gaussian over $V_{x_1,\ldots,x_m}$ and its mean vector and covariance form coincide with those of $v_{x_1,\ldots,x_m}$.
Hence, by the uniqueness part of Lemma~\ref{thm:generalized_moments2} we have $v_{x_1,\ldots,x_m} \stackrel{d}{=} \proj_{x_1,\ldots,x_m} v_{x_1,\ldots,x_n}$.
This finishes the proof.
\end{proof}

We are now ready to apply the Kolmogorov extension theorem to show existence of the desired distribution.

\begin{proposition}
There exists a unique measure $\pi_\infty$ on the infinite product space $\prod_{x\in X} T_x X$.\footnote{Note that this is the Tychonoff product of topological spaces rather than a direct product of linear spaces.}
\end{proposition}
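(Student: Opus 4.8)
The plan is to apply the Kolmogorov extension theorem (Result~\ref{result:tao}) directly, with index set $A = X$ and, for each $x \in X$, the measurable space $X_x = T_x X$ equipped with its standard vector-space topology and the associated Borel $\sigma$-algebra. For a finite subset $B = \{x_1, \ldots, x_n\} \subseteq X$, the finite product $X_B = \prod_{x \in B} T_x X$ coincides, as a topological vector space and hence as a measurable space, with the direct sum $V_{x_1, \ldots, x_n} = T_{x_1} X \oplus \ldots \oplus T_{x_n} X$, since direct sums and direct products agree for finite families. Under this identification I set $\mu_B = \pi_{x_1, \ldots, x_n}$, the Gaussian measure constructed above.

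Two hypotheses of Result~\ref{result:tao} then remain to be verified. First, inner regularity: each $\pi_{x_1, \ldots, x_n}$ is a Borel probability measure on the finite-dimensional vector space $V_{x_1, \ldots, x_n}$, which, being linearly isomorphic to $\R^{nd}$, is a Polish space. By the standard tightness result for Borel probability measures on Polish spaces, every such measure is inner regular with respect to compact sets, so this condition holds automatically. Second, the consistency condition~\eqref{project-measure}: for nested finite subsets $\{x_1, \ldots, x_m\} \subseteq \{x_1, \ldots, x_n\} \subseteq X$, the canonical projection $\proj_{x_1, \ldots, x_m}$ induced by the direct sum must satisfy $(\proj_{x_1, \ldots, x_m})_* \mu_{x_1, \ldots, x_n} = \mu_{x_1, \ldots, x_m}$. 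This is exactly the statement of Proposition~\ref{prop:projective}, which establishes that $(\pi_{x_1, \ldots, x_n})$ is a projective family.

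With both hypotheses in hand, Result~\ref{result:tao} produces a unique probability measure $\pi_\infty$ on the Tychonoff product $\prod_{x \in X} T_x X$ whose pushforward along each finite-dimensional projection recovers the prescribed marginal $\pi_{x_1, \ldots, x_n}$, delivering both existence and uniqueness as claimed.

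The step requiring the most care is checking that the hypotheses of the general, topological form of the extension theorem are genuinely satisfied rather than merely plausible: concretely, that the finite-product topology on $X_B$ agrees with the topology on $V_{x_1, \ldots, x_n}$ under which Gaussianity was defined in Lemma~\ref{thm:generalized_moments2}, and that inner regularity holds. Both follow routinely once the finite-dimensional structure is invoked. It is also worth recording explicitly that no ordering ambiguity arises: since $\pi_{x_1, \ldots, x_n}$ is defined symmetrically through double sums over indices, it is invariant under relabeling of the points and therefore descends to a measure indexed by the \emph{set} $\{x_1, \ldots, x_n\}$, as required by the theorem.
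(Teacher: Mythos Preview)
Your proposal is correct and follows essentially the same route as the paper: apply the Kolmogorov extension theorem with index set $X$ and spaces $T_x X$, invoke Proposition~\ref{prop:projective} for projectivity, and note that inner regularity is automatic on finite-dimensional vector spaces. Your write-up is simply more explicit about the identification of finite products with direct sums, the Polish-space justification for inner regularity, and the symmetry in the labeling of points.
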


\begin{proof}
We apply the prior result \ref{result:tao}.
Let $X$ be the index set, and take $(T_x X)_{x\in X}$, equipped with the standard topology and Borel $\sigma$-algebra as our measurable spaces.
For each finite $\{x_1,\ldots,x_n\}\subseteq X$, take $\pi_{x_1,\ldots,x_n}$ as our probability measure, and note that since each $\pi_{x_1,\ldots,x_n}$ is a finite measure on a finite-dimensional real vector space $V_{x_1,\ldots,x_n}$, it is automatically inner regular.
Moreover, the family of measures $(\pi_{x_1,\ldots,x_n})_{\{x_1,\ldots,x_n\}\subseteq X}$ is projective by Proposition \ref{prop:projective}. The claim follows.
\end{proof}

This gives our GP as a measure on an infinite Cartesian space: we now map this measure into the space of sections.

\begin{corollary} \label{thm:exists_unique}
There exists a unique measure $\pi_{\Gamma_{\f{nns}}(TX)}$ on $\Gamma_{\f{nns}}(TX)$ equipped with the pushforward $\sigma$-algebra.
\end{corollary}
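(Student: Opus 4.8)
The plan is to observe that $\Gamma_{\f{nns}}(TX)$ is, up to a harmless relabelling, nothing other than the product space $\prod_{x\in X} T_x X$ on which the preceding proposition already furnishes the unique measure $\pi_\infty$, and then to transport $\pi_\infty$ across this identification.

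First I would introduce the map $\Phi : \prod_{x\in X} T_x X \-> \Gamma_{\f{nns}}(TX)$ sending a family $(v_x)_{x\in X}$ to the section $f$ defined by $f(x) = (x, v_x)$. This $f$ is a genuine element of $\Gamma_{\f{nns}}(TX)$ since $\proj_X \after f = \id_X$ by construction. Conversely, any section $f$ satisfies $f(x) = (x, v_x)$ for a \emph{unique} $v_x \in T_x X$, precisely because the defining condition $\proj_X \after f = \id_X$ fixes the basepoint and leaves only the fibre component free; this yields the inverse $f \|> (v_x)_{x\in X}$. Hence $\Phi$ is a bijection.

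Next I would equip $\Gamma_{\f{nns}}(TX)$ with the pushforward $\sigma$-algebra along $\Phi$, declaring $A \subseteq \Gamma_{\f{nns}}(TX)$ measurable exactly when $\Phi^{-1}(A)$ is measurable in the product. With this choice $\Phi$ is, by definition, a measurable space isomorphism, i.e., a measurable bijection with measurable inverse. I would then set $\pi_{\Gamma_{\f{nns}}(TX)} = \Phi_* \pi_\infty$, which is a probability measure; this settles existence.

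For uniqueness I would use that a measurable space isomorphism induces a bijection between measures on the two spaces via pushforward, under which the prescribed projective marginals are preserved. Since the preceding proposition gives $\pi_\infty$ as the unique measure on $\prod_{x\in X} T_x X$ with those marginals, its image $\Phi_* \pi_\infty$ is correspondingly the unique measure on $\Gamma_{\f{nns}}(TX)$ compatible with them. I do not anticipate a substantive obstacle here: the authors' choice of the pushforward $\sigma$-algebra is tailored so that $\Phi$ is automatically bimeasurable, so the only point requiring verification is the elementary claim that $\Phi$ is a well-defined bijection, which is immediate from $\proj_X \after f = \id_X$. The genuine content lies in the earlier Kolmogorov-extension argument; this corollary merely reindexes that measure from the product to the section space.
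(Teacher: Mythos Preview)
Your proposal is correct and follows essentially the same approach as the paper: the paper defines the identical map $\c{I}$ (your $\Phi$) sending $s \in \prod_{x\in X} T_x X$ to the section $x \mapsto (x, \proj_x s)$ and sets $\pi_{\Gamma_{\f{nns}}(TX)} = \c{I}_* \pi_\infty$. Your write-up is in fact more thorough than the paper's terse two-line proof, since you spell out bijectivity, the role of the pushforward $\sigma$-algebra, and the uniqueness argument, all of which the paper leaves implicit.
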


\begin{proof}
Define the operator $\c{I} : \prod_{x\in X} T_x X \-> \Gamma_{\f{nns}}(TX)$ by 
\[
(\c{I}s)(x) = (x,\proj_xs)
\]
for all $x \in X$ and $s \in \prod_{x\in X} T_x X$.
Take $\pi_{\Gamma_{\f{nns}}(TX)} = \c{I}_* \pi_\infty$.
\end{proof}

This is the probability distribution of our Gaussian process. 
We are now ready to define Gaussian vector fields, and show that each Gaussian vector field in turn possesses a mean vector field and cross-covariance kernel.

\begin{definition}
\label{def:vect-gp2}
Let $X$ be a manifold.
We say that a random vector field $f : \Omega \-> \Gamma_{\f{nns}}(TX)$ is \emph{Gaussian} if for any finite set of locations $(x_1,\ldots,x_n) \in X^n$, the random vector $f(x_1),\ldots,f(x_n) \in T_{x_1}X \oplus \ldots \oplus T_{x_n}X$ is Gaussian in the sense of Definition \ref{def:gauss-dual}.
\end{definition}

\begin{definition} \label{def:mean-and-kernel}
Let $f : \Omega \-> \Gamma_{\f{nns}}(TX)$ be a Gaussian vector field.
Define $\mu$ to be the unique vector field for which, for any $x \in X$ and any $\phi \in T^*_{x}$, we have that
\[
\dualprod{\phi}{\mu(x)} = \E \dualprod{\phi}{f(x)}
.
\]
Next taking an additional $x' \in X$ and $\psi \in T^*_{x'}$, define the cross-covariance kernel $k$ by
\[
k(\phi,\psi) = \Cov(\dualprod{\phi}{f(x)},\dualprod{\psi}{f(x')})
.
\]
\end{definition}

Summarizing, we obtain the following claim.

\begin{theorem}
Every pair consisting of a mean vector field and symmetric fiberwise bilinear positive definite function $k : T^*X \x T^*X \-> \R$, which we call a \emph{cross-covariance kernel}, defines a unique (distribution-wise) Gaussian vector field in the sense of Definition \ref{def:vect-gp2}.
Conversely, every Gaussian vector field admits and is characterized uniquely by this pair.
\end{theorem}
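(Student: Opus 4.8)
The plan is to assemble the final theorem from the machinery developed in the preceding lemmas and propositions, handling the two directions separately and then checking that they are mutually inverse. For the forward direction, given a mean vector field $\mu \in \Gamma_{\f{nns}}(TX)$ and a cross-covariance kernel $k$, I would form the induced system of finite-dimensional marginals $\pi_{x_1,\ldots,x_n} = \operatorname{N}(\mu_{x_1,\ldots,x_n}, k_{x_1,\ldots,x_n})$. Lemma~\ref{thm:generalized_moments2} guarantees that each such marginal exists and is uniquely determined by its moments, while Proposition~\ref{prop:projective} verifies the projectivity hypothesis~\eqref{project-measure}. The Kolmogorov extension theorem (Result~\ref{result:tao}) then yields a unique measure $\pi_\infty$ on $\prod_{x\in X} T_x X$, and pushing forward along the measurable isomorphism $\c{I}$ from Corollary~\ref{thm:exists_unique} produces a unique measure $\pi_{\Gamma_{\f{nns}}(TX)}$ on the space of sections. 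A random variable $f$ with this law is a Gaussian vector field in the sense of Definition~\ref{def:vect-gp2}, since its finite-dimensional marginals are by construction the Gaussian measures $\pi_{x_1,\ldots,x_n}$.

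For the converse, given a Gaussian vector field $f$, I would define $\mu$ and $k$ as in Definition~\ref{def:mean-and-kernel} and check that these are genuinely a mean vector field and a cross-covariance kernel. Taking $n=1$ in Definition~\ref{def:vect-gp2} shows each $f(x)$ is Gaussian in $T_x X$, so Lemma~\ref{thm:generalized_moments1} supplies a unique mean $\mu(x) \in T_x X$, which assembles into a section. Symmetry and fiberwise bilinearity of $k$ follow directly from the corresponding properties of covariance, and positive semi-definiteness follows from bilinearity of covariance, since $\sum_{i=1}^n\sum_{j=1}^n k(\alpha_{x_i}, \alpha_{x_j}) = \Cov(Z, Z) \geq 0$ where $Z = \sum_{i=1}^n \dualprod{\alpha_{x_i}}{f(x_i)}$.

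Finally I would establish the bijection by composing the two constructions. Starting from $(\mu, k)$, building $f$, and reading off its mean and kernel via Definition~\ref{def:mean-and-kernel} recovers $(\mu, k)$ pointwise, because the marginals of $f$ are exactly $\pi_{x_1,\ldots,x_n}$ whose moments are $\mu_{x_1,\ldots,x_n}$ and $k_{x_1,\ldots,x_n}$ by Lemma~\ref{thm:generalized_moments1}. Conversely, starting from an arbitrary Gaussian vector field, extracting $(\mu, k)$, and reconstructing yields a process with identical finite-dimensional marginals, hence an identical law by the uniqueness clause of the Kolmogorov extension theorem.

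The step I expect to require the most care is this last round-trip — confirming that the distribution of a Gaussian vector field on $\Gamma_{\f{nns}}(TX)$ is genuinely pinned down by its finite-dimensional marginals. Here the choice of $\sigma$-algebra is decisive: one must verify that the cylinder sets arising from finite evaluations generate the pushforward $\sigma$-algebra, so that two laws agreeing on every finite marginal coincide, via a $\pi$-system uniqueness argument. The remaining verifications are routine consequences of Lemmas~\ref{thm:generalized_moments1}~and~\ref{thm:generalized_moments2} together with bilinearity of covariance.
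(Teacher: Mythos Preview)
Your proposal is correct and follows essentially the same approach as the paper, which simply cites Corollary~\ref{thm:exists_unique}, Definition~\ref{def:vect-gp2}, and Definition~\ref{def:mean-and-kernel} as its entire proof. Your write-up is in fact more thorough: you explicitly verify that the extracted $k$ is a valid cross-covariance kernel and you flag the $\sigma$-algebra issue underlying the uniqueness of the round-trip, both of which the paper leaves implicit.
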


\begin{proof}
Corollary~\ref{thm:exists_unique}, Definition~\ref{def:vect-gp2}, and Definition~\ref{def:mean-and-kernel}.
\end{proof}

\subsection*{Embeddings (Proof of Proposition \ref{prop:embedding})}

\begin{proposition}
Let $\f{emb} : X \-> \R^p$ be an embedding, let $f$ be a Gaussian vector field on $X$, and denote by $\v{f}_{\f{emb}} : \f{emb}(X) \-> \R^p$ its pushforward along the embedding, that is, for any $x \in X$,
\[
\v{f}_{\f{emb}}(\f{emb}(x)) = \d_x\!\f{emb}(f(x)),
\]
where $\d_x\!\f{emb} : T_x X \-> T_{\f{emb}(x)} \R^p$ is the differential of $\f{emb}$.
Then $\v{f}_{\f{emb}}$ is a vector-valued Gaussian process in the standard sense.
\end{proposition}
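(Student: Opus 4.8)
The plan is to reduce the statement to a claim about finite-dimensional marginals and then use that Gaussianity in the sense of duality (Definition~\ref{def:gauss-dual}) is preserved under linear maps. To show that $\v{f}_{\f{emb}}$ is a Euclidean vector-valued Gaussian process, it suffices to fix an arbitrary finite collection $y_1, \ldots, y_n \in \f{emb}(X)$ and verify that $(\v{f}_{\f{emb}}(y_1), \ldots, \v{f}_{\f{emb}}(y_n))$ is jointly multivariate Gaussian in the classical sense. Since $\f{emb}$ is injective, each $y_i = \f{emb}(x_i)$ for a unique $x_i \in X$, so this collection is indexed by genuine points of $X$.

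The key observation is that this marginal is a single linear image of the corresponding marginal of $f$. Writing $V = T_{x_1}X \oplus \ldots \oplus T_{x_n}X$, I would define the block map $\Phi : V \-> (\R^p)^n$ by $\Phi(v_1, \ldots, v_n) = (\d_{x_1}\!\f{emb}(v_1), \ldots, \d_{x_n}\!\f{emb}(v_n))$, each block being the linear differential $\d_{x_i}\!\f{emb} : T_{x_i}X \-> T_{y_i}\R^p \cong \R^p$. By the defining relation of the pushforward, $(\v{f}_{\f{emb}}(y_1), \ldots, \v{f}_{\f{emb}}(y_n)) = \Phi(f(x_1), \ldots, f(x_n))$, and by Definition~\ref{def:vect-gp2} the argument $(f(x_1), \ldots, f(x_n))$ is Gaussian in the sense of duality on $V$.

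It then remains to record two routine facts. First, continuous linear images of duality-Gaussians are duality-Gaussian: for any $\psi \in ((\R^p)^n)^*$ one has $\dualprod{\psi}{\Phi v} = \dualprod{\Phi^* \psi}{v}$ with $\Phi^* \psi \in V^*$, so this is univariate Gaussian by hypothesis, and hence $\Phi v$ is duality-Gaussian. Second, on the honest vector space $(\R^p)^n \cong \R^{np}$ being Gaussian in the sense of duality coincides with classical joint Gaussianity; this is the Cram\'er--Wold characterization and is already implicit in Lemma~\ref{thm:generalized_moments2}, which identifies duality-Gaussians on a finite-dimensional space with ordinary Gaussians through a choice of basis. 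I expect no real obstacle here: the only points needing care are the canonical identification $T_{y_i}\R^p \cong \R^p$, which converts the abstract pushforward into a bona fide $\R^p$-valued random vector, and the bookkeeping of the adjoint $\Phi^*$.
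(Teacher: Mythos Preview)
Your proof is correct and follows essentially the same approach as the paper: both construct the block-linear map $\Phi = \d_{x_1}\!\f{emb} \oplus \cdots \oplus \d_{x_n}\!\f{emb}$ and argue that linear images of (duality-)Gaussians are Gaussian. You are in fact slightly more explicit than the paper, which simply invokes ``linear maps preserve Gaussianity'' without spelling out the adjoint argument or the identification of duality-Gaussianity with classical Gaussianity on $\R^{np}$.
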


\begin{proof}
Let $x_1,\ldots,x_n \in X^n$ be a finite set of arbitrary locations. In what follows, we use a slight abuse of notation by letting $x_i$ denote both $x_i$ and $\f{emb}(x_i)$ for simplicity. We claim that the random vector $(\v{f}_{\f{emb}}(x_1), \ldots, \v{f}_{\f{emb}}(x_n)) \in \R^{np}$ is multivariate Gaussian, which is sufficient to prove our result. Since $f$ is a Gaussian vector field, we have that 
\[
(f(x_1),\ldots,f(x_n)) \~[N](\mu_{x_1,\ldots,x_n},k_{x_1,\ldots,x_n})
\]
is a Gaussian random vector on $T_{x_1} X \oplus \ldots \oplus T_{x_n} X$.
Now consider the map $\phi_{x_1,\ldots,x_n} : T_{x_1} X \oplus \ldots \oplus T_{x_n} X \-> T_{\f{emb}(x_1)} \R^p \oplus \ldots \oplus T_{\f{emb}(x_n)} \R^p \cong \R^{np}$ defined as
\[
\phi_{x_1,\ldots,x_n}(f_{x_1}, \ldots, f_{x_n}) = (\d_{x_1}\!\!\f{emb}(f_{x_1}), \ldots, \d_{x_n}\!\!\f{emb}(f_{x_n})),
\]
for all $(f_{x_1}, \ldots, f_{x_n}) \in T_{x_1} X \oplus \ldots \oplus T_{x_n} X$, which is linear, owing to the linearity of $\d_x\!\f{emb}$.
Since linear maps preserve Gaussianity, it follows that the vector $\phi_{x_1, \ldots, x_n} (f(x_1),\ldots,f(x_n)) = (\v{f}_{\f{emb}}(x_1), \ldots, \v{f}_{\f{emb}}(x_n)) \in \R^{np}$ is multivariate Gaussian and the claim follows.
\end{proof}

\subsection*{Coordinate Expressions (Proof of Proposition \ref{prop:matrix-representation})}

We recall the definition of a \emph{frame} on $X$ and its dual object, namely, the \emph{coframe}.
\begin{definition}
A \emph{frame} $F$ on $X$ is defined as a collection $(e_i)_{i=1}^d$ of not necessarily smooth sections of $TX$ such that at each point $x \in X$, the vectors $(e_i(x))_{i=1}^d$ form a basis of $T_xX$. The corresponding \emph{coframe} $F^*$ is defined as a collection $(e^i)_{i=1}^d$ of not necessarily smooth  sections of $T^*X$ such that $\left<e^i(x) | e_j(x)\right> = \delta_{ij}$ for all $x \in X$.
\end{definition}

\begin{proposition}\label{prop:matrix-representation-appendix}
Let $f : \Omega \-> \Gamma_{\f{nns}}(TX)$ be a Gaussian vector field on $X$ with cross-covariance kernel ${k : T^*X \x T^*X \-> \R}$. Given a frame $F = (e_1, \ldots, e_d)$ on $X$ and $F^* = (e^1, \ldots, e^d)$ be its coframe, define $f^i = \dualprod[1]{e^i}{f}$ for all $i = 1, \ldots, d$.  Then $\v{f} = (f^1, \ldots, f^d) : \Omega \x X \-> \R^d$ is a vector-valued GP in the usual sense with matrix-valued kernel ${\m{K}_F : X \x X \-> \R^{d \x d}}$ given by
\[
\m{K}_F(x, x') =
\begin{bmatrix}
k(e^1(x), e^1(x')) & \dots  & k(e^1(x), e^d(x')) \\
\vdots           & \ddots & \vdots \\
k(e^d(x), e^1(x')) & \dots  & k(e^d(x), e^d(x'))
\end{bmatrix}.
\]
Conversely, given a vector-valued GP $\v{f} = (f^1, \ldots, f^d): \Omega \x X \-> \R^d$ and a frame $F = (e_1, \ldots, e_d)$ on $X$, $f(\cdot) := \sum_{i=1}^d f^i(\cdot) e_i(\cdot)$ defines a Gaussian vector field on $X$.
\end{proposition}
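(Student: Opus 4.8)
The plan is to prove both directions by reducing everything to the duality characterization of Gaussianity (Definition~\ref{def:gauss-dual} and the Remark following it), together with the definitions of the mean vector field and cross-covariance kernel (Definition~\ref{def:mean-and-kernel}). No analytic estimates are needed: once the finitely many points are fixed, every argument is linear-algebraic and pointwise in $x$, so in particular the (possible) non-smoothness of the frame never enters.

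\emph{Forward direction.} First I would fix points $x_1, \ldots, x_n \in X$ and show that the full collection of scalars $(f^i(x_k))_{1 \le i \le d,\, 1 \le k \le n}$ is jointly Gaussian in the usual multivariate sense. The key observation is that each $f^i(x_k) = \dualprod{e^i(x_k)}{f(x_k)}$ is obtained by testing the duality-Gaussian random vector $(f(x_1), \ldots, f(x_n)) \in V_{x_1,\ldots,x_n}$ against the covector in $V^*_{x_1,\ldots,x_n}$ that equals $e^i(x_k)$ in the $k$-th summand and is zero elsewhere. By the Remark after Definition~\ref{def:gauss-dual}, testing a duality-Gaussian vector against any finite family of covectors produces jointly Gaussian scalars; applying this to the finite family indexed by the pairs $(i,k)$ shows that $(\v{f}(x_1), \ldots, \v{f}(x_n)) \in \R^{nd}$ is multivariate Gaussian, so $\v{f}$ is a vector-valued GP in the usual sense. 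It then remains to identify the moments: the mean follows from $\E f^i(x) = \E \dualprod{e^i(x)}{f(x)} = \dualprod{e^i(x)}{\mu(x)}$, and the $(i,j)$ entry of the covariance is $\Cov(f^i(x), f^j(x')) = \Cov(\dualprod{e^i(x)}{f(x)}, \dualprod{e^j(x')}{f(x')}) = k(e^i(x), e^j(x'))$, directly by Definition~\ref{def:mean-and-kernel}. This recovers the stated matrix $\m{K}_F(x,x')$.

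\emph{Converse direction.} Here I would start from a vector-valued GP $\v{f} = (f^1, \ldots, f^d)$ and set $f(x) = \sum_{i=1}^d f^i(x) e_i(x)$; since $f^i(x) \in \R$ and $e_i(x) \in T_x X$, the map $f$ is a (not necessarily smooth) section, so $f \in \Gamma_{\f{nns}}(TX)$. To verify that $f$ is a Gaussian vector field (Definition~\ref{def:vect-gp2}), I must check that for any $x_1, \ldots, x_n$ the vector $(f(x_1), \ldots, f(x_n))$ is Gaussian in the sense of duality. Taking an arbitrary test covector $\phi = (\phi_{x_1}, \ldots, \phi_{x_n}) \in V^*_{x_1,\ldots,x_n}$ and expanding the pairing on the direct sum gives $\dualprod{\phi}{(f(x_1),\ldots,f(x_n))} = \sum_{k=1}^n \sum_{i=1}^d \dualprod{\phi_{x_k}}{e_i(x_k)} f^i(x_k)$, a linear combination with deterministic coefficients of the jointly Gaussian scalars $f^i(x_k)$; such a combination is univariate Gaussian, which is exactly the requirement of Definition~\ref{def:gauss-dual}.

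\emph{Main obstacle.} Neither direction hides a hard step; the crux is purely bookkeeping inside the duality formalism. The one place to be careful is translating the evaluation maps $f \mapsto \dualprod{e^i(x_k)}{f(x_k)}$ into honest elements of the dual space $V^*_{x_1,\ldots,x_n}$ of the relevant direct sum, and then correctly invoking the Remark after Definition~\ref{def:gauss-dual} to move between \emph{Gaussian in the sense of duality} and \emph{jointly multivariate Gaussian in coordinates}. A secondary point is to use the coframe identity $\dualprod{e^i(x)}{e_j(x)} = \delta_{ij}$ consistently, so that the coordinate functions $f^i$ are genuinely the frame components of $f$ and the two directions are mutually inverse; this is what guarantees that the construction $\v{f} \mapsto f \mapsto \v{f}$ is the identity.
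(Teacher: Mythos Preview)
Your proposal is correct and follows essentially the same route as the paper: both directions are handled by unpacking the duality definition of Gaussianity, with the forward direction reading off the covariance entries from Definition~\ref{def:mean-and-kernel} and the converse direction expanding an arbitrary test covector as a deterministic linear combination of the jointly Gaussian coordinates $f^i(x_k)$. If anything, your write-up is slightly more explicit than the paper's about why the $f^i(x_k)$ are \emph{jointly} Gaussian (via the Remark after Definition~\ref{def:gauss-dual}), which the paper asserts without elaboration.
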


\begin{proof}
First, we note that $f^i(x) = \dualprod[1]{e^i(x)}{f(x)}$ are jointly Gaussian for all $i=1, \ldots, d$ and all $x \in X$. Thus for any $x_1, \ldots, x_n \in X$, the vector $(\v{f}(x_1), \ldots, \v{f}(x_n)) \in \R^{n \x d}$ is multivariate Gaussian and therefore $\v{f}$ is a vector-valued GP in the usual sense. Now for any $x, x' \in X$, the kernel of $\v{f}$ evaluated at these points reads
\[
\m{K}_F(x, x') &= \begin{bmatrix}
\Cov(f^1(x), f^1(x')) & \dots  & \Cov(f^1(x), f^d(x')) \\
\vdots           & \ddots & \vdots \\
\Cov(f^d(x), f^1(x')) & \dots  & \Cov(f^d(x), f^d(x'))
\end{bmatrix} \\
&=
\begin{bmatrix}
k(e^1(x), e^1(x')) & \dots  & k(e^1(x), e^d(x')) \\
\vdots           & \ddots & \vdots \\
k(e^d(x), e^1(x')) & \dots  & k(e^d(x), e^d(x'))
\end{bmatrix},
\]
which follows from Definition \ref{def:mean-and-kernel}. This concludes the first part of the proof.

To prove the converse direction, for any collection of points $x_1, \ldots, x_n \in X$, define the random vector $v_{x_1, \ldots, x_n} = (f(x_1), \ldots, f(x_n))$, where $f$ is given by $f(x) = \sum_{i=1}^d f^i(x) e_i(x)$. Now for any $\phi_{x_1, \ldots, x_n} = (\phi_{x_1}, \ldots, \phi_{x_n}) \in V_{x_1, \ldots, x_n}^*$, we have
\[
\dualprod[1]{\phi_{x_1, \ldots, x_n}}{v_{x_1, \ldots, x_n}} &= \sum_{i=1}^n \dualprod[1]{\phi_{x_i}}{f(x_i)} \\
&= \sum_{i=1}^n \dualprod[1]{\phi_{x_i}}{\sum_{j=1}^d f^j(x_i) e_j(x_i)} \\
&= \sum_{i=1}^n \sum_{j=1}^d f^j(x_i) \dualprod[1]{\phi_{x_i}}{e_j(x_i)}.
\]
Since $f^j(x_i)$ is univariate Gaussian for all $i = 1, \ldots, n$ and $j = 1, \ldots, d$, the above linear combination is univariate Gaussian and therefore $v_{x_1, \ldots, x_n}$ is Gaussian in the sense of Definition \ref{def:gauss-dual}. Since $x_1, \ldots, x_n$ were chosen arbitrarily, $f$ is a Gaussian vector field.
\end{proof}

\subsection*{Gauge Independence (Proof of Corollary \ref{cor:equivariance})}

Given two frames $F, F'$ on $X$, an abstract vector $f_x \in T_xX$ has two vector representations $\v{f}_x, \v{f}'_x$ in the respective frames. Recall that $F'$ is said to be obtained from $F$ by a \emph{gauge transformation} with respect to a matrix field $\m{A}: X \-> \f{GL}(d, \R)$, if
\[
\v{f}'_x = \m{A}(x) \v{f}_x
\]
holds for all $x \in X$, and we write $F' = \m{A}F$. In the following, we compute an explicit expression for the gauge-transformed frame $\m{A}F$ and its coframe.

\begin{lemma}
Let $F = (e_1, \ldots, e_d)$ be a frame on $X$, $\m{A}: X \-> \f{GL}(d, \R)$ be a matrix field of gauge transformations, $\m{A}F = (\eps_1, \ldots, \eps_d)$ be the gauge transformed frame as above and let $(\m{A}F)^* = (\eps^{1}, \ldots, \eps^{d})$ be the corresponding coframe. Then we have the following explicit expressions
\[
\eps_i(x) &= \sum_{j=1}^d e_j(x) [\m{A}^{-1}(x)]_{ji}, & \eps^i(x) &= \sum_{j=1}^d [\m{A}(x)]_{ij} e^j(x).
\]
\end{lemma}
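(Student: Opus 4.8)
The plan is to exploit the fact that the abstract tangent vector $f_x \in T_x X$ is a frame-independent object: its two coordinate representations $\v{f}_x$ (in $F$) and $\v{f}'_x$ (in $\m{A}F$) describe one and the same vector, so expanding $f_x$ in either frame must yield the same element of $T_x X$. I fix $x \in X$ throughout; since frames need not be smooth, everything here is purely pointwise linear algebra and no regularity in $x$ enters.

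First I would write the single identity $f_x = \sum_{k} [\v{f}_x]_k\, e_k(x) = \sum_i [\v{f}'_x]_i\, \eps_i(x)$, then substitute the defining gauge relation $\v{f}'_x = \m{A}(x)\v{f}_x$, i.e. $[\v{f}'_x]_i = \sum_k [\m{A}(x)]_{ik}[\v{f}_x]_k$, into the second expansion and collect the terms carrying $[\v{f}_x]_k$. Because $F$ is a basis, $\v{f}_x$ ranges over all of $\R^d$ as $f_x$ ranges over $T_x X$, so matching coefficients of $[\v{f}_x]_k$ is legitimate and produces the relation $e_k(x) = \sum_i [\m{A}(x)]_{ik}\, \eps_i(x)$.

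Next I would read this as a matrix identity among the frame vectors and invert it. Viewing $F = (e_1,\dots,e_d)$ and $\m{A}F = (\eps_1,\dots,\eps_d)$ as row tuples of vectors, the relation reads $F = (\m{A}F)\,\m{A}(x)$ pointwise, hence $\m{A}F = F\,\m{A}(x)^{-1}$, which componentwise is exactly $\eps_i(x) = \sum_j e_j(x)\,[\m{A}^{-1}(x)]_{ji}$, the first claimed formula. The only thing to watch here is index bookkeeping---whether a transpose or an inverse appears, and in which slot the summation index sits---so I would verify the inversion by contracting $\sum_i [\m{A}(x)]_{ik}\eps_i(x)$ back and checking it returns $e_k(x)$.

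Finally, for the coframe I would not re-derive anything from scratch but instead verify the candidate $\eps^i(x) := \sum_j [\m{A}(x)]_{ij}\, e^j(x)$ directly against the defining duality condition and invoke uniqueness of the dual coframe. Using bilinearity of the pairing together with $\left< e^j(x) | e_l(x)\right> = \delta_{jl}$, I would compute $\left< \eps^i(x) | \eps_k(x)\right> = \sum_{j,l}[\m{A}(x)]_{ij}[\m{A}^{-1}(x)]_{lk}\,\delta_{jl} = [\m{A}(x)\m{A}^{-1}(x)]_{ik} = \delta_{ik}$, so this $\eps^i$ is dual to $\m{A}F$ and must coincide with $(\m{A}F)^*$. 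The main (mild) obstacle throughout is consistency of conventions: the vector basis transforms by $\m{A}^{-1}$ while the covector basis transforms by $\m{A}$---the familiar contravariant/covariant dichotomy---and keeping the row/column placement of the indices straight is precisely what makes the two formulas emerge with the exact index orders asserted in the statement.
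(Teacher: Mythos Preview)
Your proof is correct and follows essentially the same route as the paper: both derive $e_k(x) = \sum_i [\m{A}(x)]_{ik}\,\eps_i(x)$ by equating the two frame expansions of a generic $f_x$ and matching coefficients, then invert to obtain the first formula, and both verify the coframe formula by directly checking the duality pairing $\left<\eps^i(x)\,|\,\eps_k(x)\right> = \delta_{ik}$. The only cosmetic addition in your write-up is recasting the intermediate step as the row-tuple identity $F = (\m{A}F)\,\m{A}(x)$, which is just a repackaging of the same relation.
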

\begin{proof}
For any $x \in X$, let $f_x \in T_xX$ be an abstract vector, which has the vector representations $\v{f}_x$ and $\m{A}(x)\v{f}_x$ in the frames $F$ and $\m{A}F$ respectively. Letting $\v{f}_x = (f_x^1, \ldots, f_x^d)$, we have
\[
f_x = \sum_{i=1}^d f_x^i e_i(x) = \sum_{i=1}^d \sum_{j=1}^d ([\m{A}(x)]_{ji}f_x^i) \eps_j(x) &= \sum_{i=1}^d f_x^i \del{\sum_{j=1}^d \eps_j(x) [\m{A}(x)]_{ji}}.
\]
Thus, $e_i(x) = \sum_{j=1}^d \eps_j(x) [\m{A}(x)]_{ji}$, or identically, $\eps_i(x) = \sum_{j=1}^d e_j(x) [\m{A}^{-1}(x)]_{ji}$.
We now claim that $\eps^i(x) = \sum_{j=1}^d [\m{A}(x)]_{ij} e^j(x)$, which we prove by showing that it satisfies the relation $\dualprod{\eps^i(x)}{\eps_j(x)} = \delta_{ij}$ as follows:
\[
\dualprod{\eps^i(x)}{\eps_j(x)} &= \dualprod{\sum_{k=1}^d [\m{A}(x)]_{ik} e^k(x)}{\sum_{l=1}^d e_l(x) [\m{A}^{-1}(x)]_{lj}} \\
&= \sum_{k=1}^d \sum_{l=1}^d [\m{A}(x)]_{ik} \ubr{\dualprod{e^k(x)}{e_l(x)}}_{\delta_{kl}} [\m{A}^{-1}(x)]_{lj} \\
&= \sum_{k=1}^d [\m{A}(x)]_{ik} [\m{A}^{-1}(x)]_{kj} \\
&= \ubr{[\m{A}(x) \m{A}^{-1}(x)]_{ij}}_{\delta_{ij}}.
\]
This concludes the proof.
\end{proof}

The following is, then, straightforward to show.
\begin{corollary} \label{cor:matrix-equivariance-appendix}
Let $F$ be a frame on $X$ and $\m{K}_F : X \x X \-> \R^{d\x d}$ be the corresponding matrix representation of a cross-covariance kernel $k: T^*X \x T^*X \-> \R$. 
This satisfies the \emph{equivariance condition}
\[\label{eq:matrix-equivariance}
\m{K}_{\m{A} F}(x, x') = \m{A}(x) \m{K}_{F}(x,x') \m{A}(x')^T,
\]
where $\m{A} : X \-> \f{GL}(d, \R)$ is a gauge transformation applied to each point on $X$.
All cross-covariance kernels in the sense of Proposition~\ref{prop:riem-kern} arise this way.
\end{corollary}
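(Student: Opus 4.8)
The plan is to prove Corollary~\ref{cor:matrix-equivariance-appendix} by directly substituting the coframe transformation law from the preceding lemma into the matrix-valued kernel formula \eqref{eq:K_F} and invoking fiberwise bilinearity of $k$. First I would recall that, by Proposition~\ref{prop:matrix-representation-appendix}, the matrix representation of the kernel in the frame $\m{A}F$ has entries $[\m{K}_{\m{A}F}(x,x')]_{ij} = k(\eps^i(x), \eps^j(x'))$, where $\eps^i$ is the coframe of $\m{A}F$. The key input is the explicit expression $\eps^i(x) = \sum_{k=1}^d [\m{A}(x)]_{ik}\, e^k(x)$ just established in the lemma.

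The central computation is then to substitute this expression into each entry and expand using bilinearity of $k$:
\[
[\m{K}_{\m{A}F}(x,x')]_{ij} = k\del[3]{\sum_{k=1}^d [\m{A}(x)]_{ik} e^k(x),\; \sum_{l=1}^d [\m{A}(x')]_{jl} e^l(x')}= \sum_{k=1}^d \sum_{l=1}^d [\m{A}(x)]_{ik}\, k(e^k(x), e^l(x'))\, [\m{A}(x')]_{jl}.
\]
Recognizing $k(e^k(x), e^l(x')) = [\m{K}_F(x,x')]_{kl}$ and $[\m{A}(x')]_{jl} = [\m{A}(x')^T]_{lj}$, the double sum is exactly the $(i,j)$ entry of the matrix product $\m{A}(x)\, \m{K}_F(x,x')\, \m{A}(x')^T$. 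This establishes the equivariance condition \eqref{eq:matrix-equivariance} entrywise, hence as a matrix identity.

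For the final claim that all cross-covariance kernels arise this way, I would argue that every cross-covariance kernel $k$ in the sense of Proposition~\ref{prop:riem-kern} yields, via \eqref{eq:K_F}, a matrix representation $\m{K}_F$ in any fixed frame $F$; conversely the converse direction of Proposition~\ref{prop:matrix-representation-appendix} shows every such matrix representation recovers a Gaussian vector field and thus a cross-covariance kernel. Since frames are related by gauge transformations $\m{A}$, the family of representations $\{\m{K}_{\m{A}F}\}$ exhausts all matrix representations compatible with a given $k$, which is the assertion. I do not anticipate a genuine obstacle here: the entire argument is a bookkeeping exercise in bilinearity and matrix-index manipulation, with the only subtlety being care to place $\m{A}(x)$ on the left and $\m{A}(x')^T$ on the right (matching the two distinct basepoints $x, x'$) and to use the coframe law rather than the frame law, since the kernel is evaluated on covectors.
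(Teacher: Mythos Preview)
Your proposal is correct and follows essentially the same approach as the paper: substitute the coframe transformation law $\eps^i(x) = \sum_k [\m{A}(x)]_{ik}\, e^k(x)$ from the preceding lemma into the entrywise definition of $\m{K}_{\m{A}F}$, expand by fiberwise bilinearity, and recognize the matrix product. The paper's treatment of the second claim is even terser than yours (it simply observes that any abstract cross-covariance kernel, once paired with a frame, yields a gauge-independent matrix representation which in turn encodes the original kernel), so your elaboration there is fine and arguably clearer.
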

\begin{proof}
Let $F = (e_1, \ldots, e_d)$ and $\m{A}F = (\eps_1, \ldots, \eps_d)$. Then by the previous lemma, we have
\[
[\m{K}_{\m{A}F}(x,x')]_{ij} &= k(\eps^i(x), \eps^j(x')) \\
&= \sum_{k=1}^d \sum_{l=1}^d k\left([\m{A}(x)]_{ik} \,e^k(x), [\m{A}(x')]_{jl} \,e^l(x')\right) \\
&= \sum_{k=1}^d \sum_{l=1}^d [\m{A}(x)]_{ik} \,[\m{K}_{F}(x,x')]_{kl} \,[\m{A}(x')]_{jl},
\]
which proves the identity \eqref{eq:matrix-equivariance}.

The second claim is obvious: take some cross-covariance kernel in the sense of Proposition~\ref{prop:riem-kern} and some frame---this induces a gauge independent matrix-valued kernel that correspond to the cross-covariance kernel in the sense of Proposition~\ref{prop:riem-kern} from which it was constructed in the first place.
\end{proof}

\subsection*{Projected Kernels (Proof of Proposition \ref{prop:projected-kernel})}

Here we formally describe the projected kernel construction.
We start by noting some properties of the projection matrices associated with differentials of isometric embeddings.

\begin{lemma}\label{lemma:PTP=Gamma}
Let $(X, g)$ be a Riemannian manifold and $\f{emb} : X \-> \R^{d'}$ be an isometric embedding. Given a frame $F = (e_1, \ldots, e_d)$ on $X$, denote by $\m{P}_{(\.)} : X \-> \R^{d \x d'}$ its associated projection matrix, defined for every $x$ as the matrix representation of $\d_x\!\f{emb}$ within $F$,  and $\m{\Gamma} : X \-> \R^{d \x d}$, the matrix field representation of the Riemannian metric $g$, that is, $[\m{\Gamma}(x)]_{ij} = g_x(e_i(x), e_j(x))$ for all $i, j = 1, \ldots, d$ and $x \in X$. Then we have
\[
\m{P}_x\m{P}_x^T = \m{\Gamma}(x).
\]
\end{lemma}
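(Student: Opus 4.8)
The plan is to unwind the definition of the projection matrix $\m{P}_x$ and then invoke the defining property of an isometric embedding. Recall that $\m{P}_x^T \in \R^{d' \x d}$ is, by definition, the matrix representing the differential $\d_x\!\f{emb} : T_x X \-> T_{\f{emb}(x)} \R^{d'} \cong \R^{d'}$ with respect to the frame $F$ on the source and the standard basis on $\R^{d'}$. Concretely, this means that the $i$-th column of $\m{P}_x^T$---equivalently, the $i$-th row of $\m{P}_x$---is the Euclidean coordinate vector of $\d_x\!\f{emb}(e_i(x))$, so that $[\m{P}_x]_{ik}$ is the $k$-th coordinate of $\d_x\!\f{emb}(e_i(x))$. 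I would state this identification first, since it is the only place where the transpose convention enters and is the easiest step to get wrong.

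Next I would recall that $\f{emb}$ being isometric means precisely that $\d_x\!\f{emb}$ preserves inner products, i.e. for all $v, w \in T_x X$,
\[
g_x(v, w) = \langle \d_x\!\f{emb}(v), \d_x\!\f{emb}(w) \rangle_{\R^{d'}},
\]
where the right-hand side is the ambient Euclidean inner product. This is the substantive input to the argument, being simply the translation of the phrase ``preserves the metric tensor'' into a statement about the differential.

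The computation then closes immediately. Expanding the matrix product entrywise and using the identification above,
\[
[\m{P}_x \m{P}_x^T]_{ij} = \sum_{k=1}^{d'} [\m{P}_x]_{ik} [\m{P}_x]_{jk} = \langle \d_x\!\f{emb}(e_i(x)), \d_x\!\f{emb}(e_j(x)) \rangle_{\R^{d'}} = g_x(e_i(x), e_j(x)) = [\m{\Gamma}(x)]_{ij},
\]
where the second equality is the coordinate expression of the Euclidean inner product, the third is the isometry property, and the last is the definition of $\m{\Gamma}$. Since this holds for all $i, j$, we conclude $\m{P}_x \m{P}_x^T = \m{\Gamma}(x)$.

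I expect no genuine obstacle here beyond careful bookkeeping: the one subtlety is the transpose convention, namely that it is $\m{P}_x^T$ (not $\m{P}_x$) that represents the differential, so that the rows of $\m{P}_x$ carry the images of the frame vectors and the product $\m{P}_x \m{P}_x^T$ yields the Gram matrix of these images rather than some other contraction. Once this is pinned down, the isometry assumption does all the remaining work.
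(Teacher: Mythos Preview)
Your proof is correct and follows essentially the same approach as the paper: both invoke the defining property of an isometric embedding, $g_x(v,w) = \langle \d_x\!\f{emb}(v), \d_x\!\f{emb}(w)\rangle$, and then pass to coordinates in the frame $F$. The only cosmetic difference is that the paper tests the identity against arbitrary vectors $\v{v},\v{v}'$ (writing $\v{v}^T\m{\Gamma}(x)\v{v}' = \v{v}^T\m{P}_x\m{P}_x^T\v{v}'$), whereas you work entrywise with the frame vectors directly; these are equivalent.
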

\begin{proof}
Since the embedding is isometric, for any $v, v' \in T_x X$, we have
\[
g_x(u,v) = \innerprod{\d_x\!\f{emb}(v)}{\d_x\!\f{emb}(v')},
\]
which, in the corresponding vector representation with respect to a frame $F$, reads
\[
\v{v}^T \m{\Gamma}(x) \v{v}' = \innerprod{\m{P}_x^T \v{v}}{\m{P}_x^T \v{v}'} = \v{v}^T (\m{P}_x\m{P}_x^T) \v{v}'.
\]
for any $\v{v},\v{v}'$. This implies that $\m{\Gamma}(x) = \m{P}_x\m{P}_x^T$ for all $x$ and proves the claim.
\end{proof}

We proceed to describe the projected kernel construction, which lets us transform a matrix-valued kernel on an ambient space into a cross-covariance kernel on the manifold.

\begin{proposition} \label{prop:projected-kernel-appendix}
Let $(X, g)$ be a Riemannian manifold, $\f{emb} : X \-> \R^{d'}$ be an isometric embedding and $F$ be a frame on $X$.
We denote by $\m{P}_{(\.)} : X \-> \R^{d \x d'}$ the associated projection matrix under $F$, and let $\v{f}' : X \-> \R^{d'}$ be any vector-valued Gaussian process with matrix-valued kernel $\v\kappa : X \x X \-> \R^{d' \x d'}$.
Then, the vector-valued function $\v{f} = \m{P}\v{f}' : X \-> \R^d$ defines a Gaussian vector field $f$ on $X$ using the construction in Proposition \ref{prop:matrix-representation-appendix}, whose kernel under the frame $F$ has matrix representation
\[\label{eq:projection-relation-appendix}
\m{K}_F(x, x') = \m{P}_x \v\kappa(x,x') \m{P}_{x'}^T.
\]
Moreover, all cross-covariance kernels $k : T^*X \x T^*X \-> \R$ arise this way.
\end{proposition}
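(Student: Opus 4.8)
The plan is to prove Proposition~\ref{prop:projected-kernel-appendix} in two halves: first that $\v{f} = \m{P}\v{f}'$ does define a genuine Gaussian vector field with the stated matrix representation, and second the surjectivity claim that every cross-covariance kernel is obtained this way. For the first half, I would observe that since $\v{f}' : X \-> \R^{d'}$ is a vector-valued GP and $\m{P}_x \in \R^{d \x d'}$ is, for each fixed $x$, a deterministic linear map, the coordinate process $\v{f} = \m{P}\v{f}'$ is again a vector-valued GP in the usual sense: at any finite collection $x_1, \ldots, x_n$ the stacked vector $(\v{f}(x_1), \ldots, \v{f}(x_n))$ is the image of the Gaussian vector $(\v{f}'(x_1), \ldots, \v{f}'(x_n))$ under the block-diagonal linear map $\operatorname{diag}(\m{P}_{x_1}, \ldots, \m{P}_{x_n})$, hence Gaussian. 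The matrix-valued kernel of $\v{f}$ is then computed directly from bilinearity of covariance, giving $\Cov(\v{f}(x), \v{f}(x')) = \m{P}_x \Cov(\v{f}'(x), \v{f}'(x')) \m{P}_{x'}^T = \m{P}_x \v\kappa(x,x') \m{P}_{x'}^T$, which is exactly \eqref{eq:projection-relation-appendix}. Invoking the converse direction of Proposition~\ref{prop:matrix-representation-appendix}, this vector-valued GP together with the frame $F$ assembles into a bona fide Gaussian vector field $f(\cdot) = \sum_{i=1}^d f^i(\cdot) e_i(\cdot)$, so the construction is well-defined.

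The substantive part is the surjectivity claim: every cross-covariance kernel $k$ arises as a projected kernel. The approach I would take is to start from an arbitrary $k$, fix the frame $F$, and form its matrix representation $\m{K}_F$ via \eqref{eq:K_F}; by Proposition~\ref{prop:matrix-representation-appendix} this $\m{K}_F$ is a positive semi-definite matrix-valued kernel on $X$. The goal is then to exhibit an ambient vector-valued kernel $\v\kappa$ on $\R^{d'}$ such that $\m{P}_x \v\kappa(x,x') \m{P}_{x'}^T = \m{K}_F(x,x')$ for all $x, x'$. The natural candidate is to push $\m{K}_F$ back up through the projections using a right inverse of $\m{P}_x$. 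By Lemma~\ref{lemma:PTP=Gamma} we have $\m{P}_x \m{P}_x^T = \m{\Gamma}(x)$, and since $\m{\Gamma}(x)$ is the Gram matrix of a basis it is invertible; hence $\m{P}_x^+ := \m{P}_x^T \m{\Gamma}(x)^{-1}$ is a right inverse, $\m{P}_x \m{P}_x^+ = \m{I}_d$. Setting $\v\kappa(x,x') = \m{P}_x^+ \m{K}_F(x,x') (\m{P}_{x'}^+)^T$ gives a candidate ambient kernel; I would then verify that it is symmetric and positive semi-definite (inheriting these from $\m{K}_F$, since it is a congruence-type transform $\m{B}_x \m{K}_F \m{B}_{x'}^T$ with position-dependent matrices) and check that projecting it back recovers $\m{K}_F$ exactly, using $\m{P}_x \m{P}_x^+ = \m{I}_d$.

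I expect the main obstacle to lie in the surjectivity argument, specifically in confirming that the pushed-up $\v\kappa$ is genuinely positive semi-definite as a matrix-valued kernel, i.e.\ that the block matrix $[\v\kappa(x_i, x_j)]_{ij}$ is PSD for every finite point set. This follows because that block matrix equals $\m{B} \, [\m{K}_F(x_i,x_j)]_{ij} \, \m{B}^T$ where $\m{B} = \operatorname{diag}(\m{P}_{x_1}^+, \ldots, \m{P}_{x_n}^+)$, and congruence preserves positive semi-definiteness; the PSD-ness of the inner block matrix is exactly the statement that $\m{K}_F$ is a valid matrix-valued kernel, which we have from Proposition~\ref{prop:matrix-representation-appendix}. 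A secondary subtlety is that $\m{P}_x^+$ need not vary smoothly, but this is harmless: the definition of a cross-covariance kernel imposes no smoothness, and the frame itself is already allowed to be non-smooth. Finally, I would remark that the projection identity used throughout relies only on $\m{P}_x$ having full row rank $d$, which holds because $\d_x\!\f{emb}$ is injective for an embedding, closing the argument.
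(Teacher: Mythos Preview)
Your proposal is correct and, once unpacked, coincides with the paper's argument. The forward direction is identical in both: a deterministic pointwise linear image of a vector-valued GP is again a GP, and the covariance transforms as $\m{P}_x \v\kappa(x,x') \m{P}_{x'}^T$.

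For surjectivity the two presentations differ only cosmetically. The paper sets
\[
\v\kappa(x,x') = \m{P}_x^T \, \m{K}_{\m{\Gamma}^{-1}F}(x,x') \, \m{P}_{x'},
\]
and then uses Lemma~\ref{lemma:PTP=Gamma} ($\m{P}_x\m{P}_x^T = \m{\Gamma}(x)$) together with the equivariance Corollary~\ref{cor:matrix-equivariance-appendix} to get $\m{P}_x \v\kappa \m{P}_{x'}^T = \m{\Gamma}(x)\,\m{K}_{\m{\Gamma}^{-1}F}(x,x')\,\m{\Gamma}(x') = \m{K}_F(x,x')$. But by that same equivariance result, $\m{K}_{\m{\Gamma}^{-1}F} = \m{\Gamma}^{-1}\m{K}_F\m{\Gamma}^{-1}$ (using $\m{\Gamma}^T=\m{\Gamma}$), so the paper's $\v\kappa$ equals $\m{P}_x^T\m{\Gamma}(x)^{-1}\m{K}_F(x,x')\m{\Gamma}(x')^{-1}\m{P}_{x'}$, which is exactly your $\m{P}_x^{+}\m{K}_F(x,x')(\m{P}_{x'}^{+})^T$ with $\m{P}_x^{+}=\m{P}_x^T\m{\Gamma}(x)^{-1}$. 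The two constructions are literally the same object; you frame it via a right inverse and the identity $\m{P}_x\m{P}_x^{+}=\m{I}_d$, the paper via a gauge-transformed frame. Your write-up has the small bonus of making explicit that $\v\kappa$ is a genuine positive semi-definite matrix-valued kernel (as a block congruence of the PSD block matrix built from $\m{K}_F$), a point the paper leaves tacit.
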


\begin{proof}
We demonstrate the first part by computing the covariance of $\v{f}$. For any $x, x' \in X$, we have
\[
\m{K}_F(x, x')_{ij} &= \Cov(f^i(x), f^j(x'))\\
&= \Cov([\m{P}_x\v{f}'(x)]_i, [\m{P}_{x'} \v{f}'(x')]_j) \\
&= \sum_{k=1}^d \sum_{l=1}^d \Cov([\m{P}_x]_{ik} \,f'_k(x),  [\m{P}_{x'}]_{jl} \,f'_l(x')) \\
&= \sum_{k=1}^d \sum_{l=1}^d  \,[\m{P}_x]_{ik} \Cov(f_k'(x), f_l'(x')) \,[\m{P}_{x'}]_{jl} \\
&= \sum_{k=1}^d \sum_{l=1}^d \,[\m{P}_x]_{ik} \,\v\kappa(x, x')_{kl} \,[\m{P}_{x'}]_{jl},
\]
which proves the identity \eqref{eq:projection-relation-appendix}.

Conversely, let $k : T^*X \x T^*X \-> \R$ be a cross-covariance kernel. We first construct a matrix-valued kernel $\m{K}_F$ as in Proposition \ref{prop:matrix-representation-appendix}. Define
\[
\v\kappa(x, x') = \m{P}_x^T \m{K}_{\m{\Gamma}^{-1}F}(x,x') \m{P}_{x'},
\]
where $\m{\Gamma} : X \-> \R^{d \x d}$ is the matrix field representation of the metric $g$ as given in the statement of Lemma \ref{lemma:PTP=Gamma}.
Then by the same lemma, we have
\[
\m{P}_x \v\kappa(x,x') \m{P}_{x'}^T &= (\m{P}_x \m{P}_x^T) \m{K}_{\m{\Gamma}^{-1}F}(x,x') (\m{P}_{x'} \m{P}_{x'}^T) \\
&= \m{\Gamma}(x) \m{K}_{\m{\Gamma}^{-1}F}(x,x') \m{\Gamma}(x') \\
&= \m{K}_{F}(x,x'),
\]
where we used that $\m{\Gamma}(x)^T = \m{\Gamma}(x)$ and Corollary \ref{cor:matrix-equivariance-appendix} to deduce the last equality. Thus, any cross-covariance kernel $k$ can be obtained from a matrix-valued kernel $\v\kappa$ on the ambient space and therefore we do not lose any generality by working with the latter.
\end{proof}

\section{Experimental details}
\label{apdx:experiments}

Here, we include further details about the experiments conducted in Section \ref{sec:examples}.
All experiments were conducted on a single workstation with 64GB RAM, using CPU-based computation.

\subsection*{Fourier features for product kernels}

Throughout this paper we use the sparse GP formulation of \textcite{wilson2020pathwise,wilson2021pathwise} to work with GPs. In order to apply this method we need to be able to sample a Fourier feature approximation of the kernel. For stationary kernels supported on Euclidean space one typically uses a random Fourier feature (RFF) approximation \cite{rahimi_random_2008}
\[ 
\tl{f}(\cdot) &= \frac{1}{\sqrt{l}}\sum_{i=1}^l w_i \phi_i(x),
&
w_i &\~[N](0,1),
\]
where the $\phi_i$ are Fourier basis functions sampled from the spectral density of the kernel---see \textcite{sutherland15} for details.
The resulting random function $\tl{f}(\cdot)$ is then a Gaussian process with zero mean and kernel $l^{-1} \m{\Phi}(\cdot)^T \m{\Phi}(\cdot)$, where $\m{\Phi}$ is a vector of the $l$ basis functions. This approximates the true GP with a dimension-free error of the order $l^{-1/2}$. 

For kernels supported on compact spaces we use a Karhunen–Lo\'eve (KL) expansion. If we have a Gaussian process $f(\cdot)$ on a compact space, then we can optimally approximate this function (in terms of $L^2$-norm) by truncating its KL expansion
\[ 
f(\cdot) &= \sum_{i=1}^\infty w_i \psi_i(x) 
&
w_i &\~[N](0,\lambda_i)
\]
where $\phi_i, \lambda_i$ are the $i$\textsuperscript{th} eigenfunctions and values of the kernel, $\int_X \psi(x) k(x, \cdot) \d x = \lambda_i \psi_i (\cdot)$, sorted in descending order of the eingenvalues. For the squared exponential and Mat\'ern kernels on compact manifolds, these eigenfunctions are the eigenfunctions of the Laplacian of the manifold, and the eigenvalues are given by a transformation of the Laplacian eigenvalues \cite{Borovitskiy2020}.

The question then arises of what to do in the case of a product of kernels, each taking as input some different space, where some are suited to RFF approximation, and some to a KL approximation. We propose the following approach.
\1 All the RFF-appropriate kernels can be combined into one approximation by sampling the basis functions from the product measure of their Fourier transforms. 
\2 All of the KL-appropriate kernels can be combined into one approximation by computing the $k$ largest eigenvalues of the product manifold the kernels are defined on. If we have two compact manifolds with eigenvalue-function pairs $(\alpha_i, f_i(\cdot))_{i=1}^\infty$ and $(\beta_j, g_j(\cdot))_{j=1}^\infty$, then the eigenvalue-function pairs on the product manifold are $(\alpha_i + \beta_j, f_i(\cdot) g_j(\cdot))_{i,j=1}^{\infty,\infty}$ \cite{canzani_analysis_nodate}. We can repeatedly apply this to find the approximation for the kernel on arbitrary products of compact manifolds.
\3 Define the Fourier feature approximation of the combination of this RFF and KL approximations as 
\[ f(e, m) = \frac{1}{\sqrt{l}}\sum_{i=1}^l \sum_{j=1}^k w_{i,j} \phi_i(e) \psi_j(m) \quad \quad w_{i,j} \sim \mathcal{N}(0, \lambda_j) \]
where $e, m$ are the inputs to the RFF and KL appropriate kernels respectively, $\phi_i$ are the basis functions of the Euclidean kernels sampled from the product measure, and $\lambda_j, \psi_j,$ are the product eigenpairs on the product manifold. In the limit of infinite basis functions in both $l$ and $k$ this will give the correct kernel, and therefore the true prior.
\0

\subsection*{Dynamics experiment}

In this experiment, the base manifold is the state space of the single pendulum system. The position of the pendulum is represented by a single angle in $[0, 2\pi)$, which corresponds to the circle $\bb{S}^1$. The momentum lies then in its respective cotangent space. The phase space is the product of these, $\bb{S}^1 \x \R^1$.

This product manifold naturally embeds into $\R^3$ by embedding the circle into $\R^2$ in the canonical way, and leaving $\R^1$ unchanged. The embedding is then
\[
\f{emb}(q,p) = (\cos{q}, \sin{q}, p),
\]
where $q$ is the position and $p$ is the angular momentum. The global projection matrix given by
\[
\m{P}_{q, p} =
\begin{bmatrix}
-\sin{q} & \cos{q} & 0 \\
0 & 0 & 1
\end{bmatrix}.
\]
The Euclidean vector kernel we use is a separable kernel, produced by taking the product of an intrinsic squared exponential manifold kernel with the identity matrix to give a matrix valued kernel, \( \v\kappa = k_{\bb{S}^1 \x \R^1} \m{I}_{3 \x 3} \). The intrinsic manifold kernel is produced by the product of a typical Euclidean squared exponential kernel with a squared exponential kernel defined on $\bb{S}^1$ by \textcite{Borovitskiy2020}, so that \(k_{\bb{S}^1 \x \R^1} = k_{\bb{S}^1} k_{\R^1}\). The length scales of these kernels are set to 0.3 and 1.2 respectively, and the amplitude set to give $k(x,x) = 1$.

To learn the dynamics, we initialise the system at two start points, and evolve the system using leapfrog integration. From these observations of position, we backward Euler integrate the momentum of the system, $p_i = \frac{1}{2} ml( q_{i+1} - q_i)$, and from these position-momentum trajectories we estimate observations of the dynamics field
\[
\nabla_t ({q, p})_i = \del{\frac{q_{i+1} - q_i}{h} , \frac{p_{i+1} - p_i}{h}}
\]
where $h = 0.01$ is the step size.
Using these observations, we condition a sparse GP using all the data using the analytic expression for the sparse posterior kernel matrix. The result is an estimate of the system dynamics with suitable uncertainty estimates. In order to compute rollouts of these dynamics, we follow \textcite{wilson2020pathwise,wilson2021pathwise} and employ linear-time pathwise sampling of this sparse GP together with leapfrog integration \cite{hairer06}.

\subsection*{Wind interpolation experiment}
In this experiment, the base manifold is the sphere $\bb{S}^2$, which we embed naturally in $\R^3$ as
\[
\f{emb}(\phi, \theta) = (\cos{\theta} \sin{\phi}, \,\sin{\theta} \sin{\phi}, \,\cos{\phi}),
\]
where we used spherical coordinates $\phi \in (0, \pi), \theta \in [0, 2 \pi)$ to parametrise the sphere 
\[
(\phi, \theta) \in \{(0,0)\} \cup \{(\pi, 0)\} \cup (0, \pi) \x  [0, 2 \pi)
\]
We choose a frame $F = (e_1, e_2)$, where $e_1(\phi, \theta) = \hat{\phi}$ and $e_2(\phi, \theta) = \hat{\theta}$ are the unit vectors in the $\phi, \theta$ directions respectively for all $\phi \in (0, \pi), \theta \in (0, 2 \pi)$. The choice of points on the North and South poles determines the choice of gauge at these points. The corresponding orthonormal projection matrix reads
\[\label{sphere-projection}
\m{P}_{\phi, \theta} =
\begin{bmatrix}
\cos{\theta} \cos{\phi} & \sin{\theta}\cos{\phi}  & -\sin{\phi} \\
-\sin{\theta}  & \cos{\theta} & 0
\end{bmatrix},
\]
for all points, with the choice of $\theta=0$ giving the choice of frame at the poles.

For the data, we used the following publicly available data sets.
\1* The ERA5 atmospheric reanalysis data. In particular, the variables \textsc{10m-u-component-of-wind} and \textsc{10m-v-component-of-wind} from the \textsc{reanalysis-era5-single-levels} dataset for the date 01/01/2019 09:00-10:00, regridded from 0.25$^\circ$ to 5.625$^\circ$ resolution using python's \textsc{xESMF} package.
\2* The WeatherBench dataset \cite{rasp2020weatherbench}, which can be found at \url{https://github.com/pangeo-data/WeatherBench}. In particular the variables \textsc{10m-u-component-of-wind} and \textsc{10m-v-component-of-wind} at 5.625$^\circ$ resolution for the entire available period 1979/01/01 - 2018/12/31.
\3* The Aeolus trajectory data, which can be read using Python's \textsc{skyfield} API from Aeolus' two-line element set given below. \\
    \textsc{1 43600U 18066A   21153.73585495  .00031128  00000-0  12124-3 0  9990} \\
    \textsc{2 43600  96.7150 160.8035 0006915  90.4181 269.7884 15.87015039160910}
\0*
Instead of using actual observations from the Aeolus satellite, we generated our own by interpolating the ERA5 data along the satellite track, whose locations are available minutely. This is so that we can compare the predictions against the ground truth to assess the performance.
We use one hour of data, and hence 60 data points, to perform a spatial interpolation instead of a space-time interpolation, which is reasonable as the atmosphere hardly moves during that time period at the spatial scale of interest. Moreover, we include the weekly climatology as prior information (computed by taking the temporal average of historical global wind patterns for each of the 52 calendar weeks during the period 1979-2018 in WeatherBench), which captures general circulation patterns such as trade winds in the poles and the equator. This is equivalent to training the GP on the difference of the wind velocity from the weekly climatology.

For the kernel, we used Mat\'ern-3/2 on the sphere and the Euclidean space (see \textcite{Borovitskiy2020} for the construction of Mat\'ern kernels on the sphere), where the prior amplitude parameter was set to a fixed value (11.5 in the spherical case and 2.2 in the Euclidean case) and the length scale parameter was learnt from data. 
We have tried to learn the length scale initially by fitting the GP on the satellite observations and maximizing the marginal likelihood. 
However, this gave an unrealistically small value, likely due to the observations being too sparse: so, instead, we first trained a sparse GP on 150 randomly chosen time slices of the weatherbench historical wind reanalysis data and minimizing the Kullback--Leibler divergence of the variational distribution from the posterior (using the Adam optimizer with learning rate \texttt{1e-2}). The mean of the learnt length scales of the 150 samples was then used as the final length scale. Denoting by $k_{\bb{S}^2}$ the scalar Mat\'ern-3/2 kernel on the sphere, we construct a matrix-valued kernel on the ambient space $\R^3$ by taking $\v\kappa = k_{\bb{S}^2} \m{I}_{3 \x 3}$ as in the dynamics experiment, which is then used to construct the projected kernel with the projection given by \eqref{sphere-projection}.
Finally, we note that when fitting the GP on the satellite observations, we use an observation error of 1.7m/s, which reflects the sum of the random and systematic error in the Aeolus satellite, as detailed by its technical specifications \cite{aeolus-info}.

\end{document}